\theoremstyle{plain}
\newtheorem{theorem}{Theorem}[section]
\newtheorem{proposition}[theorem]{Proposition}
\newtheorem{lemma}[theorem]{Lemma}
\theoremstyle{definition}
\newtheorem{definition}[theorem]{Definition}
\theoremstyle{remark}
\icmltitlerunning{Nonparametric Density Estimation under Distribution Drift}
\DeclareMathOperator*{\Exp}{\mathbb{E}}
\DeclareMathOperator*{\Var}{\mathbb{V}}
\DeclareMathOperator*{\argmax}{\mbox{argmax}}
\DeclareMathOperator*{\TV}{TV}
\DeclareMathOperator*{\KL}{KL}
\begin{document}

\twocolumn[
\icmltitle{Nonparametric Density Estimation under Distribution Drift }



\icmlsetsymbol{equal}{*}

\begin{icmlauthorlist}
\icmlauthor{Alessio Mazzetto}{brown}
\icmlauthor{Eli Upfal}{brown}
\end{icmlauthorlist}

\icmlaffiliation{brown}{Brown University}

\icmlcorrespondingauthor{Alessio Mazzetto}{alessio\_mazzetto@brown.edu}

\icmlkeywords{Machine Learning, ICML, Density Estimation, Drift, Non-Stationary, Minimax, Lower Bound}

\vskip 0.3in
]



\printAffiliationsAndNotice{}  

\begin{abstract}
We study nonparametric density estimation in non-stationary drift settings. Given a sequence of independent samples taken from a distribution that gradually changes in time, the goal is to compute the best estimate for the current distribution. 
We prove tight minimax risk bounds for both discrete and continuous smooth densities, where the minimum is over all possible estimates and the maximum is over all possible distributions that satisfy the drift constraints. Our technique handles a broad class of drift models and generalizes previous results on agnostic learning under drift.
\end{abstract}

\section{Introduction}

Density estimation is a fundamental concept in statistics with numerous applications in data analysis and machine learning. Given a set of samples, the goal is to best estimate the probability distribution that generated these samples, often subject to some parametric or nonparametric assumptions on the family of candidate distributions. This problem has been studied extensively, for both discrete and continuous distribution functions, assuming that the samples are independent and identically distributed according to the distribution that we aim to estimate~\cite{devroye2001combinatorial}. 

In many data analysis applications, ranging from customers' preferences to weather conditions, the assumption that the samples are identically distributed is unrealistic. The underlying distribution is gradually changing over time, and estimating the current distribution inevitably relies on past data from related but not identical distributions. This work presents the first tight bounds for density estimates of both discrete and continuous smooth distributions from samples that are independent but are generated by distributions that drift in time.

Distribution drift has been studied in the context of agnostic learning with the assumption of equal bounded drift at each step \cite{bartlett1992learning}, i.e. 
there is a constant $\Delta > 0$, such that the drift between two distribution $i$ steps apart is bounded by $i\Delta$.
In this case, it has been shown that the minimax risk for learning a family of binary classifiers with VC dimension $\nu$ is $\Theta( (\nu \Delta)^{1/3})$ \cite{barve1996complexity,long1999complexity}, where the minimax risk characterizes the maximal expected error of the best algorithm that solves the problem within the specified drift assumptions. The minimax risk has not been studied with other drift patterns in this context.

In this work, we study the more general problem of density estimation, under a more detailed family of drift models. In particular, our results apply to any \textit{regular} drift sequence, where the $i$th element of this sequence provides an upper bound to the distance of the current distribution and the $i$ distribution in the sequence, and the regularity assumption prevents any abrupt change in the drift (\Cref{def:regular-sequence}). The distance used depends on the specific estimation problem, in particular, we will use the total variation distance in the case of discrete densities, and the $L_2$ distance in the case of smooth densities. The bounded drift per step is one possible model in our setting. However, when more information is available, we obtain more informative tight bounds.

For the problem of estimating a discrete distribution with support size $k$ using $n$ samples from a drifting distribution, we show a minimax risk of $\Theta \left( \sqrt{k/r}\right)$ with respect to the total variation distance, where $r \leq n$ is an integer that is easily derived from the drift sequence. For the special case of no drift, we retrieve the known minimax risk $\Theta( \sqrt{k/n})$ for the estimation of a discrete density with $n$ independent and identically distributed samples. Since the problem of estimating a discrete density with support size $k$ with respect to the total variation distance can be reduced to the problem of agnostic learning a family of binary classifiers with VC dimension $k$, we also show that our results imply a lower bound on the minimax risk for the latter problem. In particular, our results generalize the previously known lower bound that only holds for the case of bounded drift at each step \cite{barve1996complexity}.

The following simple example demonstrates the power of our approach.  Assume a sample space of size $k$, and a distribution drift that follows this pattern: a probability mass of $1-\Delta$ is distributed between the $k$ elements and does not change in time. The remaining $\Delta$ probability mass is redistributed between the $k$ elements at each step. The drift in each step is bounded by $\Delta$ and based only on this bound the best estimate can only guarantee a $\Theta((k\Delta)^{1/3})$ error. However, the drift between the current distribution and any past distribution is also bounded by $\Delta$. Incorporating this extra information, our technique provides a tighter $\Theta (\Delta)$ error estimate. 
In \Cref{sub:agnostic-learning}, we show a similar gap for the agnostic learning problem.

For the smooth density estimation problem, we seek to estimate a probability density that is $\beta$-smooth (\Cref{def:smooth}).
We
focus on the nonparametric case, i.e. we do not assume
any parametric assumption on the target density. 
We establish a minimax risk of $\Theta\left( r^{-2\beta/(2\beta+1)}
\right)$ with respect to the integrated squared loss, where again $r \leq n$ depends on the drift sequence. This results extends the known minimax risk $\Theta\left( n^{-2\beta/(2\beta+1)}
\right)$ for estimating a density from $n$ independent and identically distributed samples \cite{van2000asymptotic}. 

The results we have discussed so far establish the minimax risk for learning the current distribution at a given specific time based on past data. However, we are also interested in characterizing the minimax rate of the average risk for the online version of those problems, where we are required to provide an estimate of the current distribution at each step. This is a more challenging problem, as in the lower bound construction, we need to show that we frequently incur a high estimation error.
Nonetheless, we show that in the case of a bounded drift $\Delta$ at each step, the minimax rate for the online estimation of a discrete density with support size $k$ is $\Theta( (k \Delta)^{1/3})$. As previously discussed, this result also applies to the problem of agnostic learning a family of binary classifiers. This is the first work in the literature to provide a characterization of the minimax rate for the online version of these problems in a distribution drift setting.

Following previous work on this setting, our upper bounds on the minimax risk are obtained by considering an estimator over a window of recent samples of a properly chosen size. The size of the window is chosen to minimize the trade-off between the variance of the estimator and the error introduced by considering samples from distributions that are further away in time and exhibit a large drift. In the literature, the only lower bound construction for drifting distribution is specific to the problem of agnostic learning of binary classifiers \cite{barve1996complexity}, and it assumes a bounded drift at each step. In our paper, we develop a novel proof  strategy that allows us to obtain tight lower bounds for both the problems of discrete density estimation, and smooth density estimation under any arbitrary regular drift sequence.   We believe that our method is of independent interest, and can be possibly applied to other estimation problems with drifting distributions.

\subsection{Related Work}
The distribution drift setting has been introduced in the context of the supervised learning of binary predictors \cite{helmbold1991tracking,bartlett1992learning,helmbold1994tracking}. 
In this line of work, it has been shown that there exists an algorithm that finds a binary predictor whose expected prediction error with respect to the current distribution is at most $O(\sqrt[3]{\nu\Delta})$ larger than the expected error of the best predictor in the family, where $\nu$ is the VC-dimension of the considered family of binary predictors and $\Delta$ is an upper bound to the total variation distance of two consecutive distributions \cite{long1999complexity}. This upper bound is tight \cite{barve1996complexity}. More recent work generalized this analysis to provide upper bounds to learning any family of predictors with bounded Rademacher complexity, and introduced a finer measure of distance between consecutive distributions \cite{mohri2012new}: it uses tools from transfer learning theory \cite{mansour2009domain}, as we can observe that learning with distribution drift is a special case of learning with domain shift \cite{ben2010theory}.

The problem of density estimation in the case of independent and identically distributed samples has been extensively studied in the literature, see \cite{DBLP:books/sp/Silverman86,groeneboom2014nonparametric,scott2015multivariate} for an overview of old and recent work in this topic.
In the case of estimating a distribution with finite support size $k$, it is folklore that we can achieve an expected error $O(\sqrt{k/n})$ in total variation distance with $n$ samples. This upper bound is tight \cite{anthony1999neural}, and the minimax risk bound has been computed with exact constants \cite{kamath2015learning}. Recent work provided an analysis for the estimation of a discrete distribution with infinite countable support \cite{berend2013sharp,han2015minimax}, also using data-dependent bounds \cite{cohen2020learning}.
In the case of the estimation of a $\beta$-smooth density from $n$ independent and identically distributed samples, it is possible to obtain an expected squared error of $O\left(n^{-\frac{2\beta}{2\beta+1}}\right)$, and we refer to  \cite{tsybakov2009introduction} for a recent book on the topic. This upper bound is achieved by using different methods as kernel density estimation (e.g., see \citet{van2000asymptotic}), and it can be proven to be tight by using information-theoretic methods from minimax theory \cite{Devroye1987NonparametricDE,yu1997assouad}.

The problem of relaxing the independent and identically distributed assumptions on the samples for density estimation has been studied in the literature from a theoretical perspective. However, these work significantly diverge from our setting as they use different sets of assumptions. Multiple work addressed the problem of estimating the stationary distribution of a Markov process \cite{roussas1969nonparametric, wen2020batch}, even for arbitrary initial distribution \cite{gillert1984density}. In \cite{phillips1998nonstationary}, the authors developed an asymptotic theory for the kernel density estimate of a random walk and the kernel regression estimator of a non-stationary ﬁrst order autoregression. More similar to our setting, recent work \cite{gokcesu2017online} provides parametric density estimation results for a family of exponential distributions where the parameters of the distributions are allowed to slowly change at each step. Many other algorithms have also been proposed for online learning of densities \cite{kristan2011multivariate,garcia2012online}, however, they do not come with any theoretical analysis.

The problem of characterizing the minimax rate of the average risk for the online version of the density estimation problems can also be related to recent work on online forecasting \cite{baby2019online,baby2020adaptive}. In the online forecasting problem, the goal is to predict the current position of a vector that moves over time, given a sequence of independent noisy observations of the past positions of this vector. For this problem, they provide an adaptive algorithm that achieves an optimal minimax regret (up-to-logarithmic factors) with respect to the total variation of the position of the vector over time. It is  possible to use their algorithm to adaptively estimate a discrete density over $[k]$ in a non-stationary setting. In fact, we can describe each discrete density as a random vector over $\mathbb{R}^k$, where coordinate $i$ represents the probability of sampling the $i$-th element, and we observe a noisy observation of this vector. However, their approach estimates each coordinate independently, thus a trivial application of their work does not achieve the optimal dependency on $k$ for the problem of discrete density estimation with a bounded drift $\Delta$ at each step. 

\subsection{Our Contributions}
\begin{compactenum}
\item We introduce the concept of \emph{ regular drift sequence} - a general framework for characterizing distribution drift (\Cref{sec:distribution-drift}).
\item We establish the minimax risk for discrete density estimation with respect to any regular drift sequence (\Cref{sec:discrete}).
\item We show a generalization of previous lower bound for the problem of agnostic learning a family of binary classifier to any regular drift sequence (\Cref{sub:agnostic-learning}).
\item We establish the minimax rate for the online problem of estimating a discrete density with a bounded drift at each step (\Cref{sub:minimax-discrete-avg}).
\item We establish the minimax risk for estimating a smooth density with respect to any regular drift sequence (\Cref{sec:smooth})
\end{compactenum}

\section{Preliminary}
Let $[n] = \{1,\ldots,n\}$ for $n \in \mathbb{N}$. Consider a non-empty \emph{sample space} $\mathcal{X}$ equipped with a $\sigma$-algebra. Let $(X_i)_{i \in \mathbb{N}}$ be a an independent\footnote{A stochastic process is independent iff every finite subset of its random variables is mutually independent.} stochastic process defined over $\mathcal{X}$, and let $P_i$ be the probability distribution of the random variable $X_i$.
Given $n \in \mathbb{N}$, let $\bm{X}_n = (X_1,\ldots,X_n)$ be the random vector of the first $n$ elements of the random process. Since the $X_i$'s are independent, the distribution of $\bm{X}_n$, can be written as a \emph{product distribution} $S = P_1 \times \ldots \times P_n$ over $\mathcal{X}^n$. Given $i \leq n$, we denote with $\theta_i(S) = P_i$ the $i$-th component of $S$.

Let $\mathcal{S}_n$ be a family of candidate probability distributions for the (unknown) distribution $S$ of the random vector $\bm{X}_n$. Given an observed $\bm{X}_n \sim S$, our goal is to estimate $P_n = \theta_n(S)$. Let $\hat{\theta}_n = \hat{\theta}_n(\bm{X}_n)$  be an estimator of this property. Given a suitable metric $d(\cdot,\cdot)$ that quantifies the error of the estimation, the \emph{minimax risk} at time $n$ is 
\begin{align}
\label{eq:minimax-risk}
\inf_{\hat{\theta}_n} \sup_{S \in \mathcal{S}_n}  \Exp_{\bm{X}_n \sim S} \left[ d\left( \hat{\theta}_n(\bm{X}_n), \theta_n(S) \right)\right] \enspace , 
\end{align}
where we take the supremum (worst-case) over all the product distributions $S$ in $\mathcal{S}_n$, and the infimum is over all possible estimators $\hat{\theta}_n$. 
The minimax risk quantifies the largest estimation error that the best estimator can possibly achieve with respect to $\mathcal{S}_n$ at a given time $n$. We omit the subscript $n$ when it is clear from the context. For each estimation problem, we will adopt the most used distance in the literature for the specific estimation problem, that is the \emph{total variation distance} for discrete densities (probability mass function), and the $L_2$ \emph{distance} for smooth densities.

We are also interested in the \emph{minimax rate of the average risk}, which quantifies the average of the estimation errors of a online algorithm that at each step observes a new random variable, and outputs an estimate of its distribution based on all the previous observations. Given $i \leq n$, we let $\hat{\theta}_i(\bm{X}_i)$ be an estimator of $\theta_i(S) = P_i$. Given a metric $d$, the minimax rate of the average risk is defined  as
\begin{align}
\label{eq:minimax-cumulative-risk}
\inf_{\hat{\theta}_1,\ldots,\hat{\theta}_n} \sup_{S \in \mathcal{S}_n} \Exp_{\bm{X}_n \sim S} \left[ \frac{1}{n}\sum_{i=1}^n d\left( \hat{\theta}_i(\bm{X}_i), \theta_i(S) \right)\right] \enspace .
\end{align}

Due to space constraints, some of the proofs are deferred to the appendix.

\section{Distribution Drift}
\label{sec:distribution-drift}
The family of candidate probability distributions $\mathcal{S}_n$ is defined by the assumptions on the distribution drift in the stochastic process. The most widely used assumption in the literature is a bound on the drift at each step~\cite{bartlett1992learning}. In our setting, this is formally expressed as follows: there exists $\Delta > 0$ such that $d(P_i,P_{i+1}) \leq \Delta$ for any $i \leq n-1$. In the context of learning binary functions, minimax risk lower bound are known under this setting \cite{barve1996complexity,long1999complexity}. However, this is a very pessimistic assumption, as in the worst case the distance from $P_{i}$ to $P_{n}$ is $(n-i)\Delta$, since we can accumulate an additive error $\Delta$ at each step. 
As an illustrative example that shows the drawback of this assumption, let $\Delta \in (0,1)$ and consider a sequence of distributions $P_1,\ldots,P_n$ over $\mathbb{N}$ such that $P_i$ takes the value $0$ with probability $1-\Delta$, and the value $i$ with probability $\Delta$. This sequence has bounded drift at each step, i.e. the total variation distance between $P_i$ and $P_{i+1}$ is equal to $\Delta$. However, the total variation distance between $P_n$ and $P_i$ is also equal to $\Delta$  rather than $(n-i)\Delta$. The minimax risk subject to only the weak condition $d(P_i,P_{i+1}) \leq \Delta$ can be arbitrarily far from the best estimation error.

An alternative assumption is a polynomial drift \cite{hanneke2015learning}. That is, we assume that there exists a value $\alpha \in [0,1]$ such that $d(P_i,P_n) \leq (n-i)^{\alpha} \Delta$. While this assumption allows to obtain closed-formula upper bounds on the error for the problem of agnostic learning, no lower bounds are known in this setting.

In this work, we introduce and present upper and lower bounds with a more detailed approach for defining drift. 
\begin{definition}
\label{def:regular-sequence}
A vector $\bm{\Delta}_n = (\Delta_1,\ldots,\Delta_n) \in \mathbb{R}^n_{\geq 0}$ is a \emph{regular drift sequence} for a product distribution $S = P_1 \times \ldots \times P_n$ with respect to a metric $d(\cdot,\cdot)$ if:
\begin{compactenum}
\item $\Delta_i$ is an upper bound on the drift between $P_i$ and $P_n$, i.e. $ d(P_i,P_n)\leq \Delta_i$,
\item  the sequence $\Delta_1,\ldots,\Delta_n$ is non-increasing,
\item there is no abrupt change in the drift: there is a constant $c$ such that $\Delta_{i-1}/\Delta_i \leq c$ for any $i =2, \ldots, n-1$,
\item $\Delta_i = 0 \iff i = n$.
\end{compactenum}
\end{definition}

\paragraph{Comment.}  Our results also hold if we substitute in the above definition the requirement that $ d(P_i,P_n)\leq \Delta_i$ with the requirement that $d(P_i,P_{i+1}) \leq \Delta_i - \Delta_{i+1}$ for any $i \leq n-1$. The latter property is stronger as it implies the former, however our lower bound proof works with either definition, and we will us them interchangeably. 

In our work, we characterize the minimax risk of density estimation problems under an arbitrary regular drift sequence $\bm{\Delta}_n$. Noticeably, this is the first work to provide lower bound for estimation problems under such a general model of drift, as the previous lower bound construction assumed a bounded drift at each step \cite{barve1996complexity}.



\section{Discrete Density Estimation}
\label{sec:discrete}
In this section, we show the minimax risk and the minimax rate of the average risk for the problem of estimating discrete distributions with finite support under distribution drift. Without loss of generality, we can let 
the sample space be $\mathcal{X} = [k]$ where $k$ denotes the support size. Since the sample space is discrete, a distribution over $\mathcal{X}$ is a probability mass function $P$, such that $P(j) = \Pr_{X \sim P}(X = j)$ for $j \in [k]$.

Following previous work on estimating discrete distributions, we evaluate the quality of the estimation by the \textit{total variation distance} metric. Given two distribution $P$ and $Q$ over $[k]$, their total variation distance is defined as
\begin{align*}
\TV(P,Q) \doteq \frac{1}{2} \sum_{j \in [k]}| P(j) - Q(j)| \enspace .
\end{align*}
We consider the following family of probability distributions $\mathcal{S}(\bm{\Delta}_n, k)$ over $\mathcal{X}^n$ with regular drift $\bm{\Delta}_n$.

\begin{definition}
\label{def:family-discrete}
Let $\mathcal{X} = [k]$. Let $\bm{\Delta}_n \in \mathbb{R}^n_{\geq 0}$, and let $k > 0$. A product distribution $S = P_1 \times \ldots \times P_n$ over $\mathcal{X}^n$ belongs to $\mathcal{S}_n(\bm{\Delta}_n, k)$ if and only if $\bm{\Delta}_n$ is a regular drift sequence for $S$ with respect to the metric $\TV$.
\end{definition}

We establish the following minimax risk in this setting:

\begin{theorem}
\label{thm:discrete}
Let $\mathcal{S}_n(\bm{\Delta}_n,k)$ be defined as in \Cref{def:family-discrete}, and let 
\begin{align*}
r^* = \max \left\{ r \in [n] : \Delta_{n-r+1} \leq \sqrt{\frac{k}{r}}\right\}
\end{align*}
If $r^*$ is well-defined and $r^* > k$, then:
\begin{align*}
\inf_{\hat{\theta}_n} \sup_{S \in \mathcal{S}_n(\bm{\Delta}_n, k)}  \Exp_{\bm{X}_n \sim S}  \TV\left( \hat{\theta}_n(\bm{X}_n), \theta_n(S) \right) 
= \Theta\left( \sqrt{\frac{k}{r^*}}\right)
\end{align*}
\end{theorem}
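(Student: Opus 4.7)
The plan is to establish upper and lower bounds separately, with the variance/drift-bias balance baked into the definition of $r^*$. For the upper bound I use the windowed empirical estimator $\hat{P}_n(j) = \frac{1}{r^*}\sum_{i=n-r^*+1}^{n}\mathbf{1}\{X_i = j\}$. Writing $\bar{P} = \frac{1}{r^*}\sum_{i=n-r^*+1}^n P_i$ for its expectation and applying the triangle inequality,
\begin{align*}
\TV(\hat{P}_n, P_n) \leq \TV(\hat{P}_n, \bar{P}) + \TV(\bar{P}, P_n),
\end{align*}
the first term is $O(\sqrt{k/r^*})$ by the folklore variance argument: the per-coordinate variance of $\hat{P}_n(j)$ is at most $\bar{P}(j)/r^*$ (requiring only independence, not identical distribution), so summing $\Exp|\hat{P}_n(j) - \bar{P}(j)|$ over $j$ and applying Cauchy--Schwarz on the $k$ coordinates gives the bound. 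The second term is at most $\frac{1}{r^*}\sum_i \TV(P_i, P_n) \leq \Delta_{n-r^*+1} \leq \sqrt{k/r^*}$ by convexity of TV, monotonicity of $\Delta_i$, and the defining inequality of $r^*$.

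For the lower bound I apply Assouad's method to a paired-perturbation family on $[k]$. Partition $[k]$ into $k/2$ pairs and, for $\sigma \in \{-1,+1\}^{k/2}$, define $P^\sigma(2l-1) = (1+\sigma_l \epsilon)/k$, $P^\sigma(2l) = (1-\sigma_l \epsilon)/k$. Build the product $S^\sigma$ by setting $P_i^\sigma = P^\sigma$ for $i \in \{n-r^*+1,\dots,n\}$ and $P_i^\sigma = U$ (uniform on $[k]$) for $i \leq n-r^*$. Outside the window, the samples carry no information about $\sigma$, so the learner effectively sees only $r^*$ i.i.d.\ samples from $P^\sigma$. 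Choosing $\epsilon = c\sqrt{k/r^*}$ for a small constant $c$, the KL divergence between the $r^*$-fold products corresponding to Hamming-neighbor hypotheses is $O(1)$, and Assouad's lemma delivers the matching $\Omega(\epsilon) = \Omega(\sqrt{k/r^*})$ lower bound.

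The main obstacle is verifying the regular-drift constraints of \Cref{def:regular-sequence} for this family. Inside the window, $\TV(P_i^\sigma, P_n^\sigma) = 0$; outside it equals $\TV(U, P^\sigma) = \epsilon/2$, which must be bounded by $\Delta_{n-r^*}$. Here the maximality of $r^*$ is crucial: since $r^*+1$ violates the defining inequality, $\Delta_{n-r^*} > \sqrt{k/(r^*+1)} = \Omega(\sqrt{k/r^*})$, so taking $c$ small enough makes $\epsilon/2 \leq \Delta_{n-r^*}$, and the bound propagates to smaller $i$ by monotonicity. The stronger drift formulation $\TV(P_i^\sigma, P_{i+1}^\sigma) \leq \Delta_i - \Delta_{i+1}$ at the single transition $i = n - r^*$ is handled either by the paper's comment that the lower bound works under either convention, or by smoothly interpolating $P_i^\sigma$ across a short initial segment of the window, which only costs a constant factor in the effective sample count and preserves the $\Theta$-rate. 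The hypothesis $r^* > k$ ensures $\epsilon \leq 1$ so the perturbations are valid distributions and Assouad operates in its tight regime.
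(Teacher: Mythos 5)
Your upper bound is the same as the paper's (windowed empirical estimator, triangle-inequality split into statistical error and drift error, Cauchy--Schwarz and variance for the first, convexity of TV and monotonicity for the second). Your lower bound is correct and uses the same tool (Assouad on a $\pm\epsilon/k$ paired perturbation of the uniform distribution over a window of length $r^*$), but a genuinely different construction: you keep the perturbation \emph{constant} at $\epsilon$ throughout the window, with a single jump from uniform at $i=n-r^*$. The paper instead ramps the perturbation up gradually, setting it proportional to $\Delta_{n-r^*+1}-\Delta_i$ at time $i$, so that the final separation is $\Delta_{n-r^*+1}$ and the per-step TV is exactly $\Delta_i-\Delta_{i+1}$.

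The comparison is instructive. Your ``lazy'' construction buys two things: (i) the separation scale is your chosen $\epsilon=\Theta(\sqrt{k/r^*})$ directly, so you never need the regularity constant $c$ (condition 3 of Definition~3.1) to translate $\Delta_{n-r^*+1}$ into $\sqrt{k/r^*}$ --- maximality of $r^*$ alone gives $\Delta_{n-r^*}>\sqrt{k/(r^*+1)}\geq\frac{1}{\sqrt 2}\sqrt{k/r^*}$, which is all you use; and (ii) the case $r^*=n$ is handled uniformly rather than split off. What the paper's gradual construction buys is that it satisfies the \emph{stronger} per-step constraint $\TV(P_i,P_{i+1})\leq\Delta_i-\Delta_{i+1}$ mentioned in the paper's comment, because the per-step drift telescopes. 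Your construction only satisfies the weaker ``distance to $P_n$'' constraint actually used in Definition~3.1 --- which is all the theorem requires --- but your proposed fix for the stronger constraint (interpolate uniformly over a short initial segment) would not work in general, since $\Delta_i-\Delta_{i+1}$ can be arbitrarily small for individual $i$ even when $\Delta_{n-r^*}$ is large. The only interpolation that satisfies the per-step constraint at every $i$ is the one the paper uses, tracking $\Delta_{n-r^*+1}-\Delta_i$ exactly. Since the theorem is stated with the weaker constraint, this does not invalidate your proof, but you should drop or correct that aside.
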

This is the first work to characterize the minimax risk for the discrete density estimation problem under distribution drift. In \Cref{sub:discrete-ub}, we analyze a simple algorithm that achieves the upper bound of the theorem. 
Our results extend the known minimax risk of $\Theta(  \sqrt{k/n})$ for estimating a discrete distribution with $n$ independent and identically distributed samples. In fact, for $\bm{\Delta}_n \rightarrow 0$, we have that $r^* = n$.
Noticeably, \Cref{thm:discrete} provides matching upper and lower bound for any regular drift sequence $\bm{\Delta}_n$, and this is the first theoretical work within the distribution drift literature that provides a lower bound in such a general drift setting.  As a simple corollary of this theorem, we can obtain the minimax risk for more specific drift assumptions previously used in the literature.

\textit{Bounded drift at each step.} Assume that there exists a constant $\Delta > 0$ such that for any $S = P_1 \times \ldots \times P_n$, it holds that $\TV(P_i,P_{i+1}) \leq \Delta$. We can invoke \Cref{thm:discrete} with the regular drift sequence $\bm{\Delta}_n = ( \Delta\cdot (n-1), \ldots, \Delta, 0)$. As
$r^* = \max\left\{ r \in [n] : (r-1)\Delta \leq \sqrt{k/r} \right\}$, we can observe that for $n \gtrsim (k/\Delta^2)^{1/3}$, we have that $r^* = \Theta\left((k/\Delta^2)^{1/3}\right)$, and thus the minimax risk in this setting is $\Theta(  (k\Delta)^{1/3})$.

\textit{Polynomial drift.} Assume that there exists a $\alpha \in (0,1]$ such that $\TV(P_i,P_n) \leq (n-i)^{\alpha} \Delta$  for all $i \in [n]$. We can invoke \Cref{thm:discrete} with the regular drift sequence $\bm{\Delta}_n = ( \Delta\cdot (n-1)^{\alpha}, \Delta \cdot (n-2)^{\alpha} \ldots, \Delta, 0)$, and we obtain that for $n \gtrsim (k/\Delta^2)^{1/(2\alpha+1)}$, the minimax risk in this setting is $\Theta\left( (k\Delta)^{1/(2\alpha+1)} \right)$. This is the first work to show a lower bound with this drift assumption.

\subsection{Connection to Agnostic Learning}
\label{sub:agnostic-learning}
We can easily show that the lower bound of \Cref{thm:discrete} also applies to the problem of agnostic learning a family of binary functions with VC dimension $k$. In fact, consider the family of binary functions $\mathcal{F} = \{ f_A : A \subseteq [k] \}$, where $f_A(j) = \mathbf{1}_{\{ j \in A\}}$ for any $j \in [k]$, and observe that the VC-dimension of $\mathcal{F}$ is $k$. Let $\hat{P}$ be any estimator of $P_n$, and let $\hat{P}(A) = \sum_{j \in A}\hat{P}(j)$ for any $A \subseteq [k]$. By using the definition of total variation distance, we have that
\begin{align*}
\sup_{f_A \in \mathcal{F}} \left| \Exp_{X \sim P_n} f_A(X) - \hat{P}(A) \right| = \TV( P_n, \hat{P}) \enspace ,
\end{align*}
which shows that  the problem of estimating the distribution $P_n$ under total variation distance can be reduced to the problem of agnostic learning the family $\mathcal{F}$ with respect to the distribution $P_n$.
We can conclude that the lower bound of \Cref{thm:discrete} applies to the problem of agnostic learning a family of binary functions with VC dimension $k$ in a distribution drift setting. For the case of bounded drift at each step, as observed in the previous subsection, the lower bound is $\Omega( (k \Delta)^{1/3})$ for sufficiently large $n$, and we retrieve the result of \citet{barve1996complexity}. \Cref{thm:discrete} generalizes this lower bound to a more general model of drift, giving tighter bounds when possible, as shown in the example in the Introduction. 

\subsection{Upper Bound}
\label{sub:discrete-ub}
To prove the upper bound of \Cref{thm:discrete} fix a parameter $r\leq n$ and consider the empirical distribution $\hat{P}^{r}$ over the latest $r \leq n$ random variables:
\begin{align}
\label{discrete-estimator}
    \hat{P}^{r}(j) = \frac{1}{r}\sum_{i=n-r+1}^n \mathbf{1}_{\{X_i = j\}} \hspace{20pt} \forall j \in [k] \enspace .
\end{align}
Analogously, we define the average of the latest $r$ distributions as $P^r = (1/r)\sum_{i=n-r+1}^n P_i$. 
In order to evaluate the expected error obtained by using $\hat{P}^{r}$ as an estimate, we use the triangle inequality to decompose the error into two terms
\begin{align}
\label{eqref:discrete-decomposition}
    \Exp \TV(P_n,  \hat{P}^r) \leq \Exp \TV(P^r, \hat{P}^r ) + \TV(P^r,P_n) \enspace .
\end{align}
The first error term of this upper bound is the \emph{statistical error} of
estimating the distribution $P^r$ by its empirical distribution $\hat{P}^r$. 
This error is related to the variance of the estimator which depends on the support size of the estimated distributions. 
\begin{proposition}
\label{prop:discrete-est-error}
$\Exp \TV(P^r, \hat{P}^r )  \leq (1/2)\sqrt{k/r}$. 
\end{proposition}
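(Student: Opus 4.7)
The plan is to bound the expected total variation distance coordinatewise, using the coordinate-by-coordinate variance of $\hat{P}^r$ together with Cauchy--Schwarz to aggregate over the $k$ elements of the support. Since $\TV(P^r,\hat{P}^r) = \tfrac{1}{2}\sum_{j \in [k]} |P^r(j) - \hat{P}^r(j)|$, by linearity of expectation and Jensen's inequality it suffices to show that
\begin{equation*}
\sum_{j \in [k]} \sqrt{\Var(\hat{P}^r(j))} \;\leq\; \sqrt{k/r}.
\end{equation*}

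For each fixed $j$, write $\hat{P}^r(j) = \tfrac{1}{r}\sum_{i=n-r+1}^{n} Y_i^{(j)}$ where $Y_i^{(j)} = \mathbf{1}_{\{X_i = j\}}$ is a Bernoulli random variable with mean $P_i(j)$ and variance $P_i(j)(1-P_i(j)) \leq P_i(j)$. The $X_i$'s are independent by assumption on the stochastic process, so the $Y_i^{(j)}$'s are independent for fixed $j$, and therefore
\begin{equation*}
\Var(\hat{P}^r(j)) \;=\; \frac{1}{r^2} \sum_{i=n-r+1}^{n} \Var(Y_i^{(j)}) \;\leq\; \frac{1}{r^2} \sum_{i=n-r+1}^{n} P_i(j) \;=\; \frac{P^r(j)}{r}.
\end{equation*}
Combining this with $\Exp|P^r(j)-\hat{P}^r(j)| \leq \sqrt{\Var(\hat{P}^r(j))}$ yields $\Exp|P^r(j)-\hat{P}^r(j)| \leq \sqrt{P^r(j)/r}$.

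Summing over $j \in [k]$ and applying Cauchy--Schwarz,
\begin{equation*}
\sum_{j \in [k]} \sqrt{P^r(j)/r} \;\leq\; \sqrt{k \cdot \sum_{j \in [k]} P^r(j)/r} \;=\; \sqrt{k/r},
\end{equation*}
using that $P^r$ is a probability distribution and hence $\sum_j P^r(j) = 1$. Plugging back into the coordinatewise decomposition of $\TV$ gives $\Exp\,\TV(P^r,\hat{P}^r) \leq (1/2)\sqrt{k/r}$, as required.

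There is no real obstacle here: the proof is the textbook bound on the $\ell_1$ distance between a distribution and its empirical version, and the only thing to notice is that even though the samples come from different distributions $P_i$, independence of the $X_i$'s is enough for the variance of $\hat{P}^r(j)$ to split additively, and the bound $P_i(j)(1-P_i(j)) \leq P_i(j)$ reassembles exactly into $P^r(j)/r$. The regularity of the drift sequence is not needed at this step—it only enters later when one balances this statistical error against the bias term $\TV(P^r, P_n)$.
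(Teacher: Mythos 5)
Your proof is correct and follows exactly the same route as the paper's: coordinatewise decomposition of $\TV$, Jensen's inequality to pass from $\Exp|\hat{P}^r(j)-P^r(j)|$ to $\sqrt{\Var(\hat{P}^r(j))}$, the Bernoulli variance bound $P_i(j)(1-P_i(j))\leq P_i(j)$ under independence, and Cauchy--Schwarz to aggregate over $j\in[k]$.
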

\vspace{-10pt}
\begin{proof}

By definition $\Exp \TV(P^r, \hat{P}^r ) = (1/2)\sum_{j \in [k]} \Exp| \hat{P}_r(j) - P_r(j)|$, and using Jensen's inequality we have that  for any $j \in \mathbb{N}$, $\Exp| \hat{P}_r(j) - P_r(j)|  \leq \sqrt{ \Var( \hat{P}_r(j) )}$. Since $\hat{P}_r(j)$ is the distribution of an average of $0$-$1$ random variables, we have

\begin{align*}
\sqrt{\Var( \hat{P}_r(j) )} &= \frac{1}{r} \sqrt{\sum_{i=n-r+1}^n P_i(j)(1-P_i(j))} \\ &\leq \frac{1}{r} \sqrt{\sum_{i=n-r+1}^n P_i(j)} = \sqrt{\frac{P^r(j)}{r} }
\end{align*}
Thus, we have
\begin{align*}
\Exp \TV(P^r, \hat{P}^r )\leq \frac{1}{2}\sum_{j \in [k]} \sqrt{\Var( \hat{P}_r(j) )} \leq \frac{1}{2\sqrt{r}}\sum_{j \in [k]}{\sqrt{P^r(j)}}
\end{align*}
We conclude the proof using Cauchy-Schwarz inequality: 
$$\sum_{j \in [k]}\sqrt{P^r(j)} \leq \sqrt{\sum_{j \in [k]}P^r(j)} \sqrt{\sum_{j \in [k]}1} = \sqrt{k} \enspace .$$
\end{proof}

The second error term of the upper bound \eqref{eqref:discrete-decomposition} is the \emph{drift error}, and describes how far the distribution $P^r$ is to $P_n$. Observe that if the samples were identically distributed, this error would be zero. The drift error can be upper bounded by using the information on the drift sequence $\bm{\Delta}_n$.
\begin{proposition}
\label{prop:discrete-drift-error}
$\TV(P^r,P_n) \leq \Delta_{n-r+1}$.
\end{proposition}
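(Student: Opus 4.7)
The plan is to bound the drift error by comparing $P_n$ with each individual distribution $P_i$ appearing in the average $P^r$, using convexity of total variation together with the monotonicity of the regular drift sequence.

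First, I would write $P_n = (1/r)\sum_{i=n-r+1}^n P_n$ so that both arguments of the total variation distance have the same averaged form. Then, applying the triangle inequality (equivalently, the convexity of $\TV$ in its first argument), I would obtain
\begin{align*}
\TV(P^r, P_n) = \TV\!\left(\frac{1}{r}\sum_{i=n-r+1}^n P_i,\ \frac{1}{r}\sum_{i=n-r+1}^n P_n\right) \leq \frac{1}{r}\sum_{i=n-r+1}^n \TV(P_i, P_n).
\end{align*}
This step is immediate from the definition of $\TV$ via absolute values, since $|{\sum_i a_i}| \leq \sum_i |a_i|$ applied coordinate-wise yields the convex combination bound.

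Next I would invoke the defining properties of the regular drift sequence $\bm{\Delta}_n$. Property (1) of \Cref{def:regular-sequence} gives $\TV(P_i, P_n) \leq \Delta_i$ for each $i$ in the sum, and property (2) gives that the sequence $\Delta_1, \ldots, \Delta_n$ is non-increasing, so in particular $\Delta_i \leq \Delta_{n-r+1}$ for every $i \geq n-r+1$. Substituting into the previous display yields
\begin{align*}
\TV(P^r, P_n) \leq \frac{1}{r}\sum_{i=n-r+1}^n \Delta_i \leq \frac{1}{r}\cdot r \cdot \Delta_{n-r+1} = \Delta_{n-r+1},
\end{align*}
which is the desired inequality.

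There is no real obstacle here: the argument is a one-step application of convexity of $\TV$ followed by monotonicity of the drift sequence. The only thing to be careful about is that $\TV$ must be treated as a genuine metric (which it is on the simplex) so that the triangle/convexity inequality applies, and that the indexing matches the definition of $P^r$ as the average over $i = n-r+1,\ldots,n$ so that $\Delta_{n-r+1}$ is indeed the maximum of the relevant $\Delta_i$ values.
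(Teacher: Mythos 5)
Your proof is correct and follows essentially the same route as the paper: decompose $\TV(P^r, P_n)$ via the triangle inequality (convexity of $\TV$) into $(1/r)\sum_i \TV(P_i, P_n)$, then apply properties (1) and (2) of the regular drift sequence to bound each term by $\Delta_{n-r+1}$. The only cosmetic difference is that the paper works coordinate-wise through the explicit sum over $j \in [k]$, whereas you invoke convexity directly; the underlying argument is the same.
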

\vspace{-10pt}
\begin{proof}
We can rewrite the total variation distance as
\begin{align*}
\TV( P^r,P_n) &= \frac{1}{2}\sum_{j=1}^{k} \sum_{i=n-r+1}^n | P^r(j) - P_n(j)|\\ &\leq \frac{1}{2} \sum_{j=1}^k  \frac{1}{r} \sum_{i=n-r+1}^n | P_i(j) - P_n(j)|
\end{align*}
where in the last inequality we used the triangle inequality and the definition of $P^r$. Therefore, we obtain that $\TV( P^r,P_n) \leq (1/r) \sum_{i=n-r+1}^n \TV(P_i,P_n)$. We conclude the proof by observing that by the monotonicity of the drift sequence $\TV(P_i, P_n) \leq \Delta_i \leq \Delta_{n-r+1}$ for any $i \geq n-r+1$.
\end{proof}
By using \Cref{prop:discrete-est-error} and~\ref{prop:discrete-drift-error}, we can upper bound the estimation error \eqref{eqref:discrete-decomposition} as  $\Exp \TV(P_n,  \hat{P}^r) \leq  (1/2)\sqrt{k/r} + \Delta_{n-r+1}$. There is a trade-off: by choosing a larger $r$, we obtain a smaller statistical error, but potentially a larger drifting error. The value $r^*$ of \Cref{thm:discrete} represents an optimal criterion (up to constants) to solve this trade-off for any regular drift sequence $\bm{\Delta}_n$. The upper bound of the theorem follows by setting the parameter $r$ equal to $r^*$, for which $\Delta_{n-r^*+1} \leq \sqrt{k/r^*}$.


\subsection{Lower Bound}
\label{sub:lb-discrete}
The lower bound of \Cref{thm:discrete} is proven by using information-theoretical tools from minimax theory \cite{yu1997assouad}. In particular, we select a particular family of product probability distributions over $\mathcal{X}^n$ from $\mathcal{S}(\bm{\Delta}_n,k)$, and obtain our lower bound by arguing that the observed values do not provide enough information to distinguish among those distributions (\Cref{assouad}). This family of product probability distributions is constructed as follows. Let $r$ be a parameter such that $1\leq r \leq n$. Each product probability distribution in our family has the same distribution for the first $n-r$ random variables. That is, the first $n-r$ random variables provide no information to decide among the family. For each product distribution in this family, the last $r$ random variables steadily drift in distribution in a distinct direction subject to the constraint of the drift sequence $\bm{\Delta}_n$. We obtain a trade-off: if the value of $r$ is large, it is  easier to decide among the family as we have more time to drift apart, but we make a bigger error if we cannot decide correctly. Conversely, if the value of $r$ is small, it is harder to decide among the family, but we make a smaller error as there is less time to drift apart. Similarly to the upper bound, we obtain a tight lower bound by setting the parameter $r$ equal to $r^*$ as defined in \Cref{thm:discrete}. This choice is adopted throughout this subsection. We distinguish two cases: $(a)$ $r^*=n$ and $(b)$ $r^* < n$. In  case $(a)$, we argue that the minimax error is at least equal to the lower bound $\Omega(\sqrt{k/r^*})$ for discrete density estimation with $n=r^*$ independent and identically distributed samples. In the remaining of this subsection, we focus on case $(b)$.

In order to establish the lower bound for the minimax risk and the minimax cumulative risk, we use Assouad's Lemma as the main technical tool. This is the first work to use this information-theoretic tool to provide lower bounds in a drift setting.
Assouad's Lemma uses a family of probability distribution $\{ S_w : w \in \{0,1\}^m\}$ indexed over a hypercube $\{0,1\}^m$ for some $m \geq 1$. For two binary strings $v,w \in \{0,1\}^m$, their Hamming distance is defined as $h(v,w) = \sum_{i=1}^m \mathbf{1}_{\{v_i \neq w_i\}}$.

\begin{lemma}[Assouad’s Lemma]
\label{assouad}
Let $\theta(\cdot)$ be a target property to estimate. Let $\{ S_w : w \in \{0,1\}^m\} \subseteq \mathcal{S}$ be a family of probability distributions indexed by $w$. Let $p \geq 1$. Then:
\begin{align*}
\inf_{\hat{\theta}} \sup_{S \in \mathcal{S}} \Exp_{\bm{X} \sim S} \left[ 2^p d^p\left( \hat{\theta}(
\bm{X}), \theta(S) \right)\right] \\
\geq  \frac{m}{4} \left( \min_{v \neq w} \frac{d^p(\theta(S_w),\theta(S_v))}{h(v,w)} \right) \bigg[ \min_{\substack{v,w : \\ \lVert w \rVert_1 > \lVert v
\rVert_1 \\ h(v,w) = 1}} e^{-\KL(S_w \| S_v)}\bigg]
 \end{align*}
where $\KL$ is the Kullback–Leibler divergence and $\hat{\theta}$ is any estimator of $\theta(S)$. 
\end{lemma}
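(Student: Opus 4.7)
The plan is to prove this via the classical Assouad reduction: replace the worst-case supremum by a uniform average over the finite sub-family $\{S_w\}_{w\in\{0,1\}^m}$, convert the estimation problem into $m$ coordinate-wise testing problems on the hypercube using a nearest-hypothesis decoder, and then apply a Bretagnolle--Huber style KL-to-TV inequality on each pair of adjacent vertices.

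First I would note that
\begin{align*}
\sup_{S\in\mathcal{S}}\Exp_S\bigl[2^p d^p(\hat\theta(\bm X),\theta(S))\bigr] \geq \frac{1}{2^m}\sum_{w\in\{0,1\}^m}\Exp_{S_w}\bigl[2^p d^p(\hat\theta,\theta(S_w))\bigr].
\end{align*}
To move from the metric $d$ to the Hamming distance on the cube, I introduce the decoder $\hat w(\bm X) = \arg\min_{w\in\{0,1\}^m} d(\hat\theta(\bm X),\theta(S_w))$. The triangle inequality together with the definition of $\hat w$ gives $d(\theta(S_{\hat w}),\theta(S_w)) \leq d(\theta(S_{\hat w}),\hat\theta)+d(\hat\theta,\theta(S_w)) \leq 2\,d(\hat\theta,\theta(S_w))$, hence $d^p(\theta(S_{\hat w}),\theta(S_w)) \leq 2^p d^p(\hat\theta,\theta(S_w))$. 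Setting $\tau := \min_{v\neq w} d^p(\theta(S_v),\theta(S_w))/h(v,w)$ and using $h(\hat w,w)=\sum_{i=1}^m \mathbf{1}_{\{\hat w_i\neq w_i\}}$ yields
\begin{align*}
\Exp_{S_w}\bigl[2^p d^p(\hat\theta,\theta(S_w))\bigr] \geq \tau \sum_{i=1}^m \Pr_{S_w}[\hat w_i\neq w_i].
\end{align*}

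Next, for each coordinate $i$ I would partition $\{0,1\}^m$ into the $2^{m-1}$ adjacent pairs $(v,w)$ with $v_j=w_j$ for $j\neq i$, $v_i=0$ and $w_i=1$, so that $\lVert w\rVert_1>\lVert v\rVert_1$. For each such pair, the Bretagnolle--Huber inequality gives $1-\TV(S_w,S_v)\geq \tfrac12 e^{-\KL(S_w\|S_v)}$, and applied to the event $\{\hat w_i=1\}$ this reads
\begin{align*}
\Pr_{S_v}[\hat w_i\neq 0] + \Pr_{S_w}[\hat w_i\neq 1] \geq 1-\TV(S_v,S_w) \geq \tfrac{1}{2} e^{-\KL(S_w\|S_v)}.
\end{align*}
Averaging over the $2^{m-1}$ pairs contributes a factor $1/4$ times $\min e^{-\KL}$, and summing over $i\in[m]$ produces the factor $m/4$ in the stated bound.

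The main obstacle is keeping the two-point bound sharp. Pinsker's inequality alone would give only a $(1-\sqrt{\KL/2})_+$ type bound, which vanishes once adjacent hypercube distributions have KL divergence bigger than a small constant, and this is too weak for the regime in which one wants $\tau$ of the minimax order required in \Cref{thm:discrete}. The exponential form supplied by Bretagnolle--Huber is what lets the construction tolerate $\KL=\Theta(1)$ per coordinate while losing only a universal constant. A minor bookkeeping point is that, since $\TV$ is symmetric, either orientation of the KL is a valid upper bound on $\TV$; the convention $\lVert w\rVert_1>\lVert v\rVert_1$ in the statement is chosen to match constructions (such as the one used for \Cref{thm:discrete}) in which probability mass is added rather than removed, so that the log-densities inside $\KL(S_w\|S_v)$ remain well-defined.
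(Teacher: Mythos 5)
Your proposal is correct and matches the paper's approach: the paper simply cites the classical Assouad bound (van der Vaart, Lemma 24.3) and then applies the inequality $\lVert P \land Q\rVert \geq \tfrac12 e^{-\KL(P\|Q)}$ to convert the affinity term to the stated KL form, which is exactly the Bretagnolle--Huber step you isolate. Your argument is a self-contained reconstruction of that cited lemma (nearest-hypothesis decoder, pairing of adjacent vertices, two-point testing bound), and the constants $2^{m-1}\cdot\tfrac12/2^m = 1/4$ reproduce the $m/4$ factor in the statement.
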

Our statement of Assouad's Lemma follows immediately by adapting to our notation its classic statement as in \citep[Lemma 24.3]{van2000asymptotic}. Differently from the latter statement, we state it with the $\KL$-divergence by using the known inequality  $\lVert P \land Q \rVert = \lVert Q \land P\rVert \geq (1/2)\exp(-\KL(P || Q))$ that holds for any distributions $P$ and $Q$. Our formulation is more convenient for the computations of this paper.

Without loss of generality, assume that $k$ is even. Our goal is to properly construct a family of sequence of drifting distributions and apply Assouad's Lemma. We construct a family of product distributions $\{ S_w : w \in \{0,1\}^{k/2} \}$ as follows. For each $w \in \{0,1\}^{k/2}$, we have that $S_w = P_{w,1} \times \ldots \times P_{w,n}$ is the product distributions of $n$ discrete probability distributions over $[k]$. For any $j \in [k]$ and $w \in \{0,1\}^{k/2}$, we define
\begin{align*}
P_{w,i}(j) = \begin{cases}
\frac{1}{k} \hspace{20pt} &\mbox{ if } i \leq n-r^* \\
\frac{1}{k} + (-1)^j w_{\lceil j/2 \rceil}\frac{\Delta_{n-r^*+1}-\Delta_i}{k}  &\mbox{ if } i > n-r^*
\end{cases}
\end{align*}
Intuitively, for any $i 
\geq n-r^*+1$, if $w_j = 1$, then the probabilities of the elements $2j-1$ and $2j$ change as follows: $P_{w,i}(2j-1)$ decreases, while the probability $P_{w,i}(2j)$ increases of the same amount. The following proposition shows that our family of product distributions is well-defined.
\begin{proposition}
\label{prop:discrete-well-defined-set}
    $\{ S_w : w \in \{0,1\}^{k/2} \} \subseteq \mathcal{S}_n(\bm{\Delta}_n, k) $
\end{proposition}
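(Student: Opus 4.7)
The plan is to verify that each $S_w$ in the proposed family satisfies all four conditions of Definition~\ref{def:regular-sequence} with the given sequence $\bm{\Delta}_n$. Conditions 2--4 (monotonicity, no abrupt change, and $\Delta_i = 0 \iff i = n$) are properties of the sequence $\bm{\Delta}_n$ itself, which is assumed at the outset to be a regular drift sequence; they therefore hold automatically for every $S_w$. Before tackling condition~1, namely $\TV(P_{w,i}, P_{w,n}) \leq \Delta_i$, I would first verify that each $P_{w,i}$ is a valid probability mass function. The key observation is that the perturbations come in $\pm$ pairs indexed by $\lceil j/2\rceil$: within each pair $(2\ell-1, 2\ell)$ the sign factor $(-1)^j$ makes the two perturbations cancel, so the masses sum to $2/k$ on each pair and hence to $1$ overall. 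For non-negativity one needs $\Delta_{n-r^*+1} \leq 1$, which follows from $\Delta_{n-r^*+1} \leq \sqrt{k/r^*}$ combined with the standing assumption $r^* > k$ of Theorem~\ref{thm:discrete}.

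Next, I would directly compute $\TV(P_{w,i}, P_{w,n})$ in two cases. For $i \leq n - r^*$, $P_{w,i}$ is uniform and $P_{w,n}(j) - P_{w,i}(j) = (-1)^j w_{\lceil j/2 \rceil}\, \Delta_{n-r^*+1}/k$ (using $\Delta_n = 0$), so $\sum_j |P_{w,i}(j)-P_{w,n}(j)| = 2\|w\|_1 \Delta_{n-r^*+1}/k$, whence $\TV(P_{w,i}, P_{w,n}) = \|w\|_1 \Delta_{n-r^*+1}/k \leq \Delta_{n-r^*+1}/2 \leq \Delta_i/2$ by the non-increasing property of $\bm{\Delta}_n$. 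For $i > n - r^*$, the constant $\Delta_{n-r^*+1}$ cancels inside the difference, giving $|P_{w,i}(j) - P_{w,n}(j)| = w_{\lceil j/2 \rceil}\, \Delta_i/k$, so $\TV(P_{w,i}, P_{w,n}) = \|w\|_1 \Delta_i/k \leq \Delta_i/2$. Either way, the bound $\TV(P_{w,i}, P_{w,n}) \leq \Delta_i$ demanded by condition~1 holds, with constant-factor slack.

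There is no real obstacle beyond careful bookkeeping of the pair-cancellation; the only genuinely non-trivial point is the non-negativity check, which is precisely where the hypothesis $r^* > k$ enters. Optionally, to match the alternative formulation mentioned in the Comment following Definition~\ref{def:regular-sequence}, I would also verify the stronger stepwise condition $\TV(P_{w,i}, P_{w,i+1}) \leq \Delta_i - \Delta_{i+1}$: for $i < n-r^*$ both distributions are uniform so the distance is zero, and for $i \geq n-r^*+1$ the same pair-cancellation argument yields $\|w\|_1(\Delta_i - \Delta_{i+1})/k \leq (\Delta_i - \Delta_{i+1})/2$, which is non-negative by monotonicity. This shows the family $\{S_w\}$ is well-defined as an element of $\mathcal{S}_n(\bm{\Delta}_n, k)$ under either formulation of regularity.
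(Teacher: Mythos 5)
Your proposal is correct and follows essentially the same approach as the paper: both verify that each $P_{w,i}$ is a valid probability mass function (using $\Delta_{n-r^*+1} \leq \sqrt{k/r^*} < 1$, which is where $r^* > k$ enters) and then check the drift constraint. The only minor difference is that you compute $\TV(P_{w,i}, P_{w,n})$ directly in two cases, whereas the paper establishes the stepwise bound $\TV(P_{w,i}, P_{w,i+1}) \leq \Delta_i - \Delta_{i+1}$ and then invokes the triangle inequality; you also cover this stepwise route in your optional remark, so the two arguments coincide.
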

\vspace{-10pt}
\begin{proof}First, we have that each $P_{w,i}$ is a well defined probability distribution for any $w$ and $i$, as 
$\sum_{j \in [k]}P_{w,i}(j)=1$ by construction, and $0 \leq P_{w,i}(j) \leq 1$ for any $j \in [k]$, since $\Delta_{n-r^*+1} \leq \sqrt{k/r^*} < 1$ by assumption of the theorem.
Second, $S_w$ satisfies the assumptions on the drift sequence $\bm{\Delta}_n$ of \Cref{def:family-discrete}. In fact, for any $i < n-r^*+1$, we have that $\TV(P_{w,i},P_{w,i+1}) = 0 \leq \Delta_{i} - \Delta_{i+1}$, and for any $i \geq n-r^*+1$, we have that
\begin{align*}
\TV(P_{w,i}, P_{w,i+1}) &= \frac{1}{2} \sum_{\ell \in [k/2] : w_\ell = 1} \frac{2}{k}\left| \Delta_{i} - \Delta_{i+1}\right| \\
&= \left(\Delta_{i} - \Delta_{i+1}\right)\frac{ \lVert w \rVert_1}{k} \leq \Delta_i - \Delta_{i+1}
\end{align*}
By using the triangle inequality, this also implies that $\TV(P_{w,i},P_{w,n}) \leq \Delta_i$ for any $i \in [n]$.
\end{proof}
Let $\theta_n(\cdot)$ be defined as in \Cref{sec:distribution-drift}. The next two technical propositions show how to compute the quantities required to apply Assouad's Lemma in our setting.

\begin{proposition}
\label{prop:discrete-techn1}
Given $w,w' \in \{0,1\}^{k/2}$, we have that
$\TV(\theta_n(S_w),\theta_n(S_{w'})) = (\Delta_{n-r^*+1}/k)\cdot h(w,w')$.
\end{proposition}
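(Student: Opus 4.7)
The plan is to unwind the definitions and perform a direct computation, noting that the pair structure in the construction causes each Hamming-distance coordinate to be counted twice, which exactly accounts for the factor $1/k$ (rather than $2/k$) in the final expression.

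First, I would specialize the construction to $i = n$. By condition (4) of \Cref{def:regular-sequence}, $\Delta_n = 0$, so for every $j \in [k]$,
\begin{align*}
P_{w,n}(j) = \frac{1}{k} + (-1)^j w_{\lceil j/2 \rceil} \frac{\Delta_{n-r^*+1}}{k},
\end{align*}
and analogously for $P_{w',n}(j)$. Subtracting, the constant $1/k$ terms cancel, leaving
\begin{align*}
P_{w,n}(j) - P_{w',n}(j) = (-1)^j \bigl(w_{\lceil j/2 \rceil} - w'_{\lceil j/2 \rceil}\bigr) \frac{\Delta_{n-r^*+1}}{k}.
\end{align*}

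Next I would plug this into the definition of total variation distance. Since $|(-1)^j| = 1$,
\begin{align*}
\TV(P_{w,n}, P_{w',n}) = \frac{1}{2} \sum_{j=1}^{k} \bigl|w_{\lceil j/2 \rceil} - w'_{\lceil j/2 \rceil}\bigr| \cdot \frac{\Delta_{n-r^*+1}}{k}.
\end{align*}
The key observation is that the map $j \mapsto \lceil j/2 \rceil$ is two-to-one: for each $\ell \in [k/2]$, both $j = 2\ell - 1$ and $j = 2\ell$ map to $\ell$. Grouping the sum by $\ell$ therefore yields
\begin{align*}
\sum_{j=1}^{k} \bigl|w_{\lceil j/2 \rceil} - w'_{\lceil j/2 \rceil}\bigr| = 2 \sum_{\ell=1}^{k/2} |w_\ell - w'_\ell| = 2\, h(w, w'),
\end{align*}
using the definition of Hamming distance together with the fact that $|w_\ell - w'_\ell| \in \{0,1\}$ equals $\mathbf{1}_{\{w_\ell \neq w'_\ell\}}$. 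Substituting back gives $\TV(\theta_n(S_w), \theta_n(S_{w'})) = (\Delta_{n-r^*+1}/k)\, h(w,w')$, as claimed.

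There is no real obstacle here; the computation is purely mechanical once $\Delta_n = 0$ is used to eliminate the $\Delta_i$ term. The only point that requires a moment of care is the two-to-one pairing induced by $\lceil j/2\rceil$, which is precisely what converts the apparent factor $2/k$ from the construction into the factor $1/k$ in the statement.
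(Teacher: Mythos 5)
Your proof is correct and follows essentially the same approach as the paper's: specialize the construction to $i=n$, invoke $\Delta_n = 0$, and observe that the two-to-one pairing $j \mapsto \lceil j/2\rceil$ contributes a factor of $2$ which cancels the $1/2$ in the total variation definition. Your write-up is simply a more detailed unwinding of the paper's one-line computation.
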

\vspace{-10pt}
\begin{proof}
By definition of $\theta_n(\cdot)$, we have that $\TV(\theta_n(S_w),\theta_n(S_{w'})) = \TV(P_{w,n}, P_{w',n})$. Thus,
\begin{align*}
\TV(P_{w,n}, P_{w',n}) = \frac{1}{2}\frac{\Delta_{n-r^*+1}-\Delta_n}{k} \sum_{\ell=1}^{k/2}2|w'_\ell - w_\ell|,
\end{align*}
and the statement follows by observing that $\Delta_n = 0$ and that $\sum_{\ell}|w'_\ell - w_\ell| = h(w,w')$.
\end{proof}

\begin{proposition}
\label{prop:discrete-techn2}
Let $w,w' \in \{0,1\}^k$ such that $h(w,w') = 1$, and let $w_q \neq w'_q$ be the bit in which they differ. Assume that $w_q=1$. Then $\KL( S_{w} \| S_{w'}) \leq 2$.
\end{proposition}
\vspace{-10pt}
\begin{proof}
By using the factorization property of the 
KL-divergence (see \Cref{prop:factorizationKL} in the appendix), we have that
$\KL( S_{w} \| S_{w'}) =  \sum_{i=n-r^*+1}^n \KL(P_{w,i} \| P_{w',i})$, since $P_{w,i} = P_{w',i}$ for $i < n-r^*+1$.  By using the definition of $\KL$, and the definition of $S_{w}$, we obtain
\begin{align*}
\KL( S_{w} \| S_{w'}) &= \sum_{i=n-r^*+1}^n \Bigg[ P_{w,i}(2q) \log \left(\frac{P_{w,i}(2q)}{P_{w',i}(2q)}\right)  \\ &+ P_{w,i}(2q-1)\log \left(\frac{P_{w,i}(2q-1)}{P_{w',i}(2q-1)}\right) \Bigg] \enspace ,
\end{align*}
as $P_{w,i}$ and $P_{w',i}$ only differ on the elements $2q-1$ and $2q$ for $i \geq n-r^*+1$. If we expand the computation above, we have
\begin{align*}
\KL( S_{w} \| S_{w'}) =
& \frac{1}{k} \sum_{i=n-r+1}^r\Bigg\{ \left( 1 +  \Delta_i\right) \log\left( 1 + \Delta_i \right) \\
&+ \left( 1 - \Delta_i \right) \log\left(1 - \Delta_i \right) \Bigg\} 
\end{align*}
For any $i \geq n-r^*+1$, the following chain of inequality holds $\Delta_i \leq \Delta_{n-r^*+1} \leq \sqrt{k/r^*} < 1$. Thus, we can use the inequality $(1+x)\log(1+x) + (1-x)\log(1-x) \leq 2x^2$ that holds for any $|x|<1$ (see \Cref{prop:auxiliary}). We obtain:
\begin{align*}
\KL( S_{w} \| S_{w'})  \leq  \frac{2}{k} \sum_{i=n-r^*+1}^n \Delta_i^2 \leq (2r^*/k)\Delta^2_{n-r^*+1} 
\end{align*}
where the last inequality follows from the assumption that the sequence $\Delta_1,\ldots,\Delta_n$ is non-increasing. We can conclude the proof by observing that due to the definition of $r^*$, it holds that $\Delta^2_{n-r^*+1} \leq k/r^*$.
\end{proof}

We apply Assouad's Lemma with the family of product distributions $\{ S_w : w \in \{0,1\}^{k/2}\} \subseteq \mathcal{S}_n(\bm{\Delta}_n,k)$, and obtain the following lower bound to the minimax risk
\begin{align*}
\frac{k}{16} \left( \min_{v \neq w} \frac{d(\theta(S_w),\theta(S_v))}{h(v,w)} \right) \bigg( \min_{\substack{v,w : \\ \lVert w \rVert_1 > \lVert v
\rVert_1 \\ h(v,w) = 1}} e^{-\KL(S_w \| S_v)}\bigg)
\end{align*}
We use \Cref{prop:discrete-techn1,prop:discrete-techn2} to lower bound the above expression, and obtain
\begin{align*}
\inf_{\hat{\theta}_n} \sup_{S \in \mathcal{S}_n(\bm{\Delta}_n,k)}\Exp_{\bm{X}_n \sim S}\TV\left( 
\hat{\theta}_n, \theta_n(S)\right) \geq \frac{\Delta_{n-r^*+1}}{16e^2}
\end{align*}
Note that since $r^* < n$, by using the definition of $r^*$, we have that $\Delta_{n-r^*} > \sqrt{k/(r^*+1)}$. Due to the regularity assumption of the drift sequence $\bm{\Delta}_n$, there exists a constant $c$ such that $\Delta_{n-r^*}/\Delta_{n-r^*+1} \leq c$. Therefore, we have that $\Delta_{n-r^*+1} \geq \frac{1}{c} \Delta_{n-r^*} = \Omega(\sqrt{k/r^*})$, and this concludes the proof of the lower bound of \Cref{thm:discrete}.

\subsection{Minimax Rate for the Average Risk}
\label{sub:minimax-discrete-avg}
\Cref{thm:discrete} provides the minimax risk for estimating a distribution at a given time $n$. In this subsection, we want to characterize the minimax rate of the average risk defined as in \eqref{eq:minimax-cumulative-risk} for the online version of this problem. In particular, we want to show that the lower bound proven in \Cref{thm:discrete} for a specific time $n$ is not a rare event but can hold on average for arbitrarily long sequence of estimates.
We study this problem for the case of bounded drift at each steps, i.e. there exists a constant $\Delta > 0$ such that $\TV(P_i,P_{i+1}) \leq \Delta$ for any $i \leq n-1$. Let $\mathcal{S}_n(\Delta,k)$ denote the family of product distributions $S = P_1 \times \ldots \times P_n$ over $\mathcal{X}^{n}$ for which this property holds. The minimax rate of the average risk over $n$ steps is
\begin{align*}
\Pi_n(\Delta,k)\doteq \inf_{\hat{\theta}_1,\ldots,\hat{\theta}_n} \sup_{S \in \mathcal{S}_n(\Delta,k)} \Exp_{\bm{X}_n \sim S}\sum_{i=1}^n \frac{\TV( \hat{\theta}_i(\bm{X}_i), \theta_t(S))}{n}
\end{align*}
and we let $\Pi(\Delta,k) \doteq \lim_{n \rightarrow \infty} \Pi_n(\Delta,k)$. The value $\Pi(\Delta,k)$ represents the limit minimax rate over a arbitrarily large number of steps for the online estimation of a discrete density over $[k]$ with the assumption that the distance of two consecutive distribution is upper bounded by $\Delta$. We can show the following result.
\begin{theorem}
\label{thm:discrete-cumulative-risk}
Let $\Delta \in (0,1/k)$. Then, we have that $
\Pi(\Delta,k) = \Theta( (k \Delta)^{1/3} )$
\end{theorem}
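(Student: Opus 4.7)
For the upper bound, my plan is to apply the single-time estimator of \Cref{thm:discrete} at every step. Under bounded drift $\Delta$ per step the induced regular drift sequence at time $i$ is $\Delta_j = (i-j)\Delta$, so the optimal window size is $r^* = \Theta\bigl((k/\Delta^2)^{1/3}\bigr)$ and the per-step expected $\TV$ error is $O((k\Delta)^{1/3})$ whenever $i \geq r^*$. For $i < r^*$ one uses the trivial bound $\TV \leq 1$; since the number of such early steps is independent of $n$, averaging and letting $n \to \infty$ yields $\Pi(\Delta,k) = O((k\Delta)^{1/3})$.

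For the lower bound the idea is to ``tile'' the single-time Assouad construction of \Cref{sub:lb-discrete} across the whole sequence so that an $\Omega(\sqrt{k/r^*})$ error is forced at a constant fraction of time steps. Set $r^* = \Theta\bigl((k/\Delta^2)^{1/3}\bigr)$ and partition $[n]$ into $M = \lfloor n/r^* \rfloor$ consecutive blocks $B_1,\ldots,B_M$ of length $r^*$. Index a family of product distributions by tuples $\sigma = (w^{(1)},\ldots,w^{(M)}) \in (\{0,1\}^{k/2})^M$: within block $B_m$, the distributions drift away from and then back to a common reference uniform distribution, in the direction prescribed by $w^{(m)}$ and at step-size $\Delta$, exactly as in \Cref{sub:lb-discrete}. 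The drift shape inside each block is chosen so that the distribution differs by $\Omega(\sqrt{k/r^*})$ in $\TV$ across distinct $w^{(m)}$ during a constant fraction of the steps of $B_m$; returning to the same reference at the block boundaries keeps $\TV(P_i,P_{i+1}) \leq \Delta$ globally and places the family in $\mathcal{S}_n(\Delta,k)$.

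The final step is a block-wise application of Assouad's Lemma. Fixing all $w^{(m')}$ for $m' \neq m$ and flipping a single bit of $w^{(m)}$ only changes distributions inside $B_m$, so by the factorization of the $\KL$ divergence (\Cref{prop:factorizationKL}) the relevant divergence reduces to a within-block computation essentially identical to \Cref{prop:discrete-techn2} and gives $\KL \leq O(1)$. Assouad then yields an $\Omega(\sqrt{k/r^*}) = \Omega((k\Delta)^{1/3})$ lower bound on the expected $\TV$ error averaged over the constant-fraction ``plateau'' of $B_m$. Summing over blocks and dividing by $n$ gives the desired bound on $\Pi_n(\Delta,k)$, which passes to the limit $\Pi(\Delta,k)$. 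The main obstacle I anticipate is precisely this passage from a single-time Assouad bound (all that \Cref{sub:lb-discrete} directly produces) to an \emph{average} lower bound across each block: my plan is to engineer the in-block drift so that the distributions remain near their maximally separated configuration throughout a constant fraction of the block, for instance by drifting to that configuration during the first quarter of the block, holding for half, and drifting back during the last quarter. This preserves the $O(1)$ total $\KL$ bound between adjacent hypotheses while giving a uniform $\Omega(\sqrt{k/r^*})$ $\TV$ separation across the plateau, so that Assouad can be applied at every time in the plateau with the same constants.
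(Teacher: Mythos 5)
Your upper bound is exactly the paper's: run the single-time window estimator at every step and send $n\to\infty$. Your lower-bound construction is also the right one---blocks of length $\nu = \Theta\bigl((k/\Delta^2)^{1/3}\bigr)$ that drift away from and back to the uniform reference, with directions chosen by $w^{(m)} \in \{0,1\}^{k/2}$---but the final step, ``block-wise application of Assouad's Lemma \ldots summing over blocks and dividing by $n$,'' glosses over the central difficulty. Assouad applied to block $B_m$ produces a $\sup$ over the family of candidate blocks, and the maximizing $w^{(m)}$ generally depends on the estimator and on the choice of all preceding blocks. You cannot simply add up $M$ separate block-level $\sup$'s: that would only show that for each block there is \emph{some} bad distribution for that block, not that a \emph{single} fixed $S^*\in\mathcal S_n(\Delta,k)$ is bad on average over the whole horizon.

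The paper resolves this with an adversarial greedy construction. \Cref{lemma:block-lower-bound} first shows that for \emph{every} fixed prefix $S' = B_1 \times\cdots\times B_{\ell-1}$, the maximum over $B\in\mathcal B$ of the expected block-averaged error in block $\ell$ is $\Omega\bigl((k\Delta)^{1/3}\bigr)$; then, given the fixed estimator sequence, one picks $B_\ell^\ast = \argmax_{B\in\mathcal B}(\cdots)$ one block at a time, so that the resulting $S^\ast = B_1^\ast\times\cdots\times B_m^\ast$ is simultaneously bad on every block, and linearity of expectation finishes the proof. An alternative that would make your sketch rigorous is a single application of Assouad to the product hypercube $(\{0,1\}^{k/2})^M$ of dimension $Mk/2$, taking $\theta(S_\sigma)$ to be the full sequence of marginals and $d$ the time-averaged $\TV$; flipping one bit changes only one block (so $\KL = O(1)$ by \Cref{prop:factorizationKL}), and the per-bit separation $\Theta(\nu^2\Delta/(kn))$ multiplied by the dimension $nk/(2\nu)$ recovers $\Omega(\nu\Delta) = \Omega\bigl((k\Delta)^{1/3}\bigr)$. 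Either way, the combination argument you elide is the actual crux and must be written out.

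A secondary point: the ``plateau'' engineering (drift up, hold, drift back) is unnecessary and also does not by itself fix the combination issue if you envision applying Assouad time-by-time, since a per-time Assouad bound still has a time-dependent maximizer. The paper instead applies Assouad \emph{once per block} using the block-averaged metric $d_b(V,W)=\frac{1}{\nu}\sum_i \TV(V_i,W_i)$, and the simple triangular drift already gives an average separation of $\Theta(\nu\Delta/k)$ per flipped bit, which is all that is needed; no plateau is required.
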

As noted in \Cref{sub:agnostic-learning}, this result also applies for the online problem  of agnostic learning a family of binary classifiers with VC dimension $k$ in the setting of bounded drift at each step.
The upper bound is an obvious corollary of \Cref{thm:discrete}. The difficulty is in obtaining the lower bound.
We note that the lower bound construction of \Cref{sub:lb-discrete} cannot be used to prove a lower bound of the minimax rate for the average risk. In fact, that construction relies on the fact that at a given time $n$, a drift has occurred only for the latest $r^*$ distribution, and it does not provide a tight lower bound for the estimation at all times $i \leq n$.

In order to prove the lower bound of \Cref{thm:discrete-cumulative-risk}, we adopt a different construction. We provide a sketch of the proof. Let $n=m \nu$. We consider product distributions $S = P_1 \times \ldots \times P_n$ that can be partitioned into $m$ blocks of length $\nu$. Let $B_i = P_{(\ell-1)r} \times \ldots \times P_{\ell r}$ be the product distribution of the block $\ell$, i.e.  the distribution of the random variables $(X_{(\ell-1)r}, \ldots, X_{\ell r})$. We let $S$ exhibit a periodic structure. In particular, we guarantee that the first distribution and last distribution of each block $B_\ell$ is a uniform distribution over $[k]$, i.e. $P_{(\ell-1)\nu}$ and $P_{\ell \nu}$ are both uniform distributions for every $\ell \in [m]$. This property plays a double role: it allows us to construct $S$ by considering a sequence of blocks; and the estimation of each block is independent, since samples outside of the block $\ell$ do not help to decide for the distribution $B_\ell$ due to the periodic structure of $S$. 

The proof of the lower bound revolves around the fact that estimating each individual block is hard. In particular, we can show a lower bound of $\Omega\left( (k\Delta)^{1/3}\right)$ to the average error of estimating a block $B_\ell$ given $\bm{X}_{\nu(\ell+1)}$. This result is obtained by using Assouad's Lemma  on a properly defined family of blocks $\mathcal{B}$.
For any sequence of estimators $\hat{\theta}_1,\ldots,\hat{\theta}_n$, we use this previous result to show how to iteratively construct a sequence of blocks $B_1,\ldots,B_m$ from $\mathcal{B}$ such that the the average risk of those estimators with respect to the distribution $S = B_1 \times \ldots \times B_m$ is $\Omega\left( (k\Delta)^{1/3}\right)$. By taking $m \rightarrow \infty$, this is sufficient to prove the lower bound. The details of the full proof are deferred to the appendix.

\section{Smooth Density Estimation}
\label{sec:smooth}
In this section, we establish the minimax risk for the problem of estimating smooth densities under distribution drift. Let our sample space be any arbitrary interval $I \subseteq \mathbb{R}$, i.e. $\mathcal{X} = I$. Given $\bm{X}_n$, our goal is to estimate the density of the distribution $P_n$. In this setting, we also use $P(x)$ to refer to the continuous density of a distribution $P$ at $x \in \mathcal{X}$. Following previous work on nonparametric density estimation, we characterize the smoothness of a density in a Sobolev sense \cite{tsybakov2009introduction}. 
\begin{definition}
\label{def:smooth}
Let $\beta  \in \mathbb{N}_{+}$. A probability density $P$ over $\mathcal{X}$ is $\beta$-smooth if $P$ is differentiable $\beta$ times, $P^{(\beta-1)}$ is absolutely continuous and
$\int \left(P^{(\beta)}(x)\right)^2 dx < \infty$.
\end{definition}

In order to evaluate the error of our estimate, we use the squared $L_2$ distance between densities. Given two densities $f$ and $g$ over $\mathcal{X}$, their $L_2$ distance is defined as
\begin{align*}
L_2(f,g) \doteq \lVert f-g \rVert  = \sqrt{\int_{\mathbb{R}}\left(f(x) - g(x)\right)^2}\enspace .
\end{align*}
In the density estimation literature, the quantity $L_2^2$ is also referred to as \textit{mean integrated squared error}, and it is the most commonly used measure of error. 
In our work, we consider the following family of smooth probability measures $\mathcal{S}(\bm{\Delta}_n,\beta)$ over $\bm{X}_n$ with regular drift $\bm{\Delta}_n$.
\begin{definition}
\label{def:family-smooth}
Let $\bm{\Delta}_n \in \mathbb{R}^n_{\geq 0}$ be a regular drift sequence, and let $\beta > 0$. A product distribution $S = P_1 \times \ldots \times P_n$ over $\mathcal{X}^n$ belongs to $\mathcal{S}(\bm{\Delta}_n, \beta)$ if and only if: $(a)$ $P_i$ is $\beta$-smooth for $i \in [n]$; $\bm{\Delta}_n$ is a regular drift sequence for $S$ with the metric $L_2$.
\end{definition}
We can establish the following minimax risk in this setting.
\begin{theorem}
\label{thm:smooth}
Let $\bm{\Delta}_n \in \mathbb{R}_+^n$ be a regular drift sequence and let $\beta >0 $. Let $\mathcal{S}_n(\bm{\Delta}_n,\beta)$ be defined as in \Cref{def:family-smooth}, and let 
\begin{align*}
r^* = \max \left\{ r \in [n] : \Delta_{n-r+1} \leq \left( \frac{1}{r}\right)^{\frac{\beta}{2\beta+1}}\right\}
\end{align*}
Let $r^* \geq 1$ be well-defined. We have:
\begin{align*}
\inf_{\hat{\theta}_n} \sup_{S \in \mathcal{S}_n(\bm{\Delta}_n, \beta)}  \Exp_{\bm{X}_n \sim S} \lVert \theta_n(S) - \hat{\theta}_n(
\bm{X}_n)\rVert^2  
= \Theta\left( (r^*)^{\frac{-2\beta}{2\beta+1}}\right)
\end{align*}
\end{theorem}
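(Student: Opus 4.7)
The plan is to mirror the structure of the proof of \Cref{thm:discrete}. For the upper bound I will analyze a kernel density estimator applied to the last $r$ samples and balance the bias--variance trade-off against the drift error; for the lower bound I will invoke Assouad's Lemma (\Cref{assouad}) on a parameterized family that begins drifting only in the last $r^*$ steps.

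For the upper bound, fix a parameter $r \leq n$, let $K$ be a fixed compactly supported kernel of order $\beta$, and define $\hat{P}^r(x) = \frac{1}{rh}\sum_{i=n-r+1}^{n} K\bigl((x-X_i)/h\bigr)$ on the most recent $r$ observations. By independence, $\Exp[\hat{P}^r] = K_h * P^r$, where $P^r = \frac{1}{r}\sum_{i=n-r+1}^n P_i$ is the averaged density, and $P^r$ is itself $\beta$-smooth by Minkowski's inequality applied to the $\beta$-th derivative. The standard mean-integrated-squared-error decomposition therefore yields $\Exp \lVert \hat{P}^r - P^r\rVert^2 = O\bigl(r^{-2\beta/(2\beta+1)}\bigr)$ with the optimal bandwidth $h \asymp r^{-1/(2\beta+1)}$. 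Combining this with the triangle inequality $\lVert \hat{P}^r - P_n\rVert^2 \leq 2\lVert \hat{P}^r - P^r\rVert^2 + 2\lVert P^r - P_n\rVert^2$ and the drift bound $\lVert P^r - P_n\rVert \leq \frac{1}{r}\sum_{i=n-r+1}^n \lVert P_i - P_n\rVert \leq \Delta_{n-r+1}$ (the $L_2$ analogue of \Cref{prop:discrete-drift-error}) gives an overall rate of $r^{-2\beta/(2\beta+1)} + \Delta_{n-r+1}^2$. Choosing $r = r^*$ balances the two terms and yields the desired $O\bigl((r^*)^{-2\beta/(2\beta+1)}\bigr)$ using the defining inequality of $r^*$.

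For the lower bound, the case $r^* = n$ reduces to the known minimax rate for i.i.d.\ $\beta$-smooth density estimation \cite{tsybakov2009introduction}, so I assume $r^* < n$ and apply \Cref{assouad} with $p=2$, $d = L_2$, and a family $\{S_w : w \in \{0,1\}^m\}$ where $m = \lceil (r^*)^{1/(2\beta+1)}\rceil$. Let $\psi$ be a fixed smooth compactly supported bump with $\int \psi = 0$ and let $\psi_j(x) = \psi(mx - (j-1))$ be its disjointly-supported translates. Pick a $\beta$-smooth baseline $P_0$ on $\mathcal{X}$ bounded below by a positive constant on $\bigcup_j \mathrm{supp}(\psi_j)$, fix an amplitude $a = c\, m^{-\beta}$ for a small absolute constant $c$, and set $\alpha_i = 1 - \Delta_i/\Delta_{n-r^*+1}$. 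Define $P_{w,i} = P_0$ for $i \leq n-r^*$ and $P_{w,i} = P_0 + \alpha_i \sum_{j=1}^m (-1)^{w_j}\, a\, \psi_j$ for $i > n-r^*$, so that $\alpha_{n-r^*+1} = 0$ and $\alpha_n = 1$; this is the continuous analogue of the construction of \Cref{sub:lb-discrete}.

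The remaining work is to verify the hypotheses of \Cref{assouad}: (i) each $P_{w,i}$ is a valid $\beta$-smooth density, since the perturbation has zero mean, positivity holds for $c$ small, and the $L_2$ norm of the $\beta$-th derivative is controlled by $\alpha_i^2 a^2 m^{2\beta} \lVert \psi^{(\beta)}\rVert^2 = O(1)$; (ii) the drift constraints $\lVert P_{w,i+1} - P_{w,i}\rVert \leq \Delta_i - \Delta_{i+1}$ reduce to $a \lVert \psi\rVert \leq \Delta_{n-r^*+1}$; (iii) a direct computation gives $\lVert \theta_n(S_w) - \theta_n(S_{w'})\rVert^2 = 4 a^2 \lVert \psi\rVert^2 h(w,w')/m$; (iv) by the factorization of $\KL$ over independent components and the $\chi^2$ upper bound applied to each coordinate, $\KL(S_w \| S_{w'}) \leq C a^2 r^*/m = O(1)$ for Hamming-neighboring pairs, since $\sum_{i>n-r^*} \alpha_i^2 \leq r^*$. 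Substituting these into \Cref{assouad} delivers a lower bound of order $a^2 \asymp (r^*)^{-2\beta/(2\beta+1)}$. I expect the main technical obstacle to be the balancing in (ii): the definition of $r^*$ only supplies the upper bound $\Delta_{n-r^*+1} \leq (r^*)^{-\beta/(2\beta+1)}$, while the construction also needs a matching lower bound $\Delta_{n-r^*+1} = \Omega((r^*)^{-\beta/(2\beta+1)})$ so that the perturbation fits inside the drift envelope. This is where the regularity condition of \Cref{def:regular-sequence} is essential, exactly as in the final step of \Cref{sub:lb-discrete}.
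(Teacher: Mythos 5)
Your upper bound is essentially the paper's: the same kernel estimator $\hat{P}^r_{K,h}$ over the last $r$ samples, the same three-way split into drift, bias and variance, the same bandwidth $h\asymp r^{-1/(2\beta+1)}$, and the same choice $r=r^*$. Your shortcut of showing that $P^r$ is itself $\beta$-smooth via Minkowski and then invoking the i.i.d.\ MISE bound packages the bias computation slightly more compactly than the paper's term-by-term Taylor argument, but the content is the same.

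Your lower bound construction is the same skeleton (Assouad on a hypercube of disjoint bump perturbations, with the drift starting only in the last $r^*$ steps), but you use a different and, as far as I can tell, more careful normalization of the time-dependent amplitude. You set the perturbation at time $i$ to
\[
\frac{\Delta_{n-r^*+1}-\Delta_i}{\Delta_{n-r^*+1}}\cdot c\,m^{-\beta}\,\psi_j,
\]
so that at time $n$ the amplitude is $\Theta(m^{-\beta})$, exactly as in the i.i.d.\ minimax construction. Applying \Cref{assouad} with $p=2$ then gives the lower bound $\Omega(a^2)=\Omega(m^{-2\beta})=\Omega((r^*)^{-2\beta/(2\beta+1)})$ directly, with the regularity condition of \Cref{def:regular-sequence} needed only to certify that the amplitude $c\,m^{-\beta}$ fits inside the drift envelope, i.e.\ $\Delta_{n-r^*+1}=\Omega(m^{-\beta})$ — a point you explicitly flag. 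The paper instead takes the \emph{unnormalized} factor $(\Delta_{n-r^*+1}-\Delta_i)/m^{\beta}$, so its time-$n$ amplitude is $\Delta_{n-r^*+1}/m^{\beta}\asymp m^{-2\beta}$, a factor $m^{\beta}$ smaller. A correct application of Assouad with $p=2$ to the paper's family yields only $\Omega(\Delta_{n-r^*+1}^2/m^{2\beta})\asymp(r^*)^{-4\beta/(2\beta+1)}$, strictly weaker than the theorem; the displayed Assouad step in \Cref{sub:lb-smooth} appears to track $d/h$ rather than $d^2/h$, which is what makes its final expression $\Omega(\Delta_{n-r^*+1}/m^{\beta})$ land on the right rate. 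Your construction avoids this bookkeeping inconsistency, your $\chi^2$-based control of $\KL(S_w\|S_{w'})$ and the Hamming-1 $L_2^2$ distance computation both check out, and the only adjustment you may wish to state explicitly is that the constant $c$ in $a=c\,m^{-\beta}$ must be taken small enough (depending on the regularity constant and $\lVert\psi\rVert$, $\lVert\psi\rVert_\infty$) for both positivity of $P_{w,i}$ and $a\lVert\psi\rVert\le\Delta_{n-r^*+1}$ to hold. As in the paper, you also handle $r^*=n$ separately by falling back on the i.i.d.\ lower bound.
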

If $\bm{\Delta}_n \rightarrow 0$, we have that $r^* = n$, and we retrieve the known minimax rate $\Theta\left( n^{-\frac{2\beta}{2\beta+1}}\right)$ for estimating a $\beta$-smooth density from $n$ independent and identically distributed samples.
We can achieve the upper bound of \Cref{thm:smooth} with a properly constructed kernel density estimator. A kernel $K$ is a function $K: \mathbb{R} \mapsto \mathbb{R}$ such that $\int K(u) du = 1$. Given a kernel $K$ and a smoothing parameter $h$, the Parzen-Rosenblatt kernel density estimator \cite{rosenblatt1956remarks,parzen1962estimation} over the previous $r$ samples is defined as
\begin{align*}
    \hat{P}^r_{K,h}(x) = \frac{1}{rh}\sum_{i=n-r+1}^n K\left( \frac{X_i-x}{h}\right) \enspace \enspace .
\end{align*}
The parameter $h$ is also referred to as bandwidth. In order to obtain an accurate estimator for highly smooth function, we need to define a special class of kernel functions.

\begin{definition}
\label{def:kernel-l}
Let $\beta \geq 1$ be an integer. We say that $K : \mathbb{R} \mapsto \mathbb{R}$ is a kernel of order $\beta$ if the functions $u \mapsto  u^j K(u)$, with $j = 0,1,\ldots, \beta$ are integrable and satisfy
\begin{align*}
    &\int K(u) du = 1, \hspace{5pt}  \int u^j K(u) du = 0  \hspace{5pt} \text{ for }j = 1 ,\ldots, \beta \\
    & \int K^2(u) du < \infty, \hspace{10pt}  \int |u|^{\beta} |K(u)| du < \infty   \enspace .
\end{align*}
We refer to the work of \citet{tsybakov2009introduction} for a discussion of kernel of order $\beta > 1$. It can be proven that a kernel of order $\beta \geq 2$ cannot be non-negative, and therefore we could obtain an estimate of the density that is negative. This problem can be addressed by taking only the positive part of the estimate, as described in the previous reference.
\end{definition}

If we let $K$ be a kernel of order $\beta$, we can prove that for any $P_1 \times \ldots \times P_n = S \in \mathcal{S}(\bm{\Delta}_n,\beta)$, it holds that 
\begin{align*}
    \Exp \left\lVert \hat{P}^r_{K,h} - P_n \right\rVert^2 = O\left( \Delta^2_{n-r+1} + \frac{1}{r \cdot h} + h^{2\beta}\right) \enspace .
\end{align*}
Observe that the optimal choice of the bandwidth  $h$ to minimize the above upper bound is independent of $\bm{\Delta}_n$. By choosing the value $h = \Theta\left( r^{-1/(2\beta+1)} \right)$, the previous upper bound becomes
\begin{align*}
\Exp \left\lVert \hat{P}^r_{K,h} - P_n \right\rVert^2 = O\left( \Delta^2_{n-r+1} + \left(\frac{1}{r}\right)^{\frac{2\beta}{2\beta+1}} \right) \enspace .
\end{align*}
This bound represents a trade-off between the drift error and the statistical error of the estimation. If we choose $r$ as $r^*$, we obtain the upper bound of the theorem.

The lower bound of the theorem is proven by using a similar strategy to the one used for discrete densities  \Cref{sub:lb-discrete}: we construct a family of product distributions that satisfy the assumption on the drift and use Assouad's Lemma. We refer the details of the proof to the appendix. We point out that it is also possible to prove an average minimax risk result similar to \Cref{thm:discrete-cumulative-risk} for smooth densities by modifying the proof of the discrete case.

\section{Conclusion and Open Questions}
We obtain tight minimax risk bounds for the discrete and  smooth density estimation problems under a general model of distribution drift. We also present the first 
average minimax risk rate in the drift setting. Our results also apply to the important problem of agnostic learning of a family of binary classifiers, improving the known state-of-the-art bounds in the drift setting. 

In this work, we focus on the univariate case for smooth density estimation. Univariate kernel density estimation methods naturally extend to the multivariate setting with similar assumptions \cite{ibragimov1983estimation}. In the i.i.d. case, the minimax rate becomes $O(n^{-2 \beta/(2 \beta + d)})$ with $n$ samples, where $\beta$ is the smoothness of the density and $d$ is the dimensionality of the space. We believe our framework for analysis with drift can provide a characterization of the minimax rate for the multivariate case and this is an interesting future direction.

Another interesting open problem is to provide a competitive algorithm that is oblivious to the drift sequence \cite{hanneke2019statistical}. We refer to \cite{hanneke2015learning} for preliminary results in this direction for the problem of realizable supervised learning under distribution drift.

\textbf{Acknowledgements.} 
This material is based on research sponsored by Defense Advanced Research Projects Agency (DARPA) and Air Force Research Laboratory (AFRL) under agreement number FA8750-19-2-1006 and by the National Science Foundation (NSF) under award
IIS-1813444. The U.S. Government is authorized to reproduce and distribute reprints for Governmental purposes notwithstanding any copyright notation thereon. The views and conclusions contained herein are those of the authors and should not be interpreted as necessarily representing the official policies or endorsements, either expressed or implied, of Defense Advanced Research Projects Agency (DARPA) and Air Force Research Laboratory (AFRL) or the U.S. Government.
\bibliographystyle{icml2023}
\bibliography{icmlsub}

\newpage
\appendix
\onecolumn
\section{Technical Propositions}

\textbf{Definition of KL-divergence.} Let $P$ and $Q$ be two distributions over $\mathcal{X}$. In the continuous case (probability density function, $\mathcal{X} = \mathbb{R}$), their Kullback–Leibler divergence is defined as
\begin{align*}
\KL(P \| Q) = \int_{\mathbb{R}} P(x) \log \left( \frac{P(x)}{Q(x)}\right)dx \enspace .
\end{align*}
In the discrete case (probability mass function, $\mathcal{X}$ is finite), their KL divergence is defined as
\begin{align*}
\KL( P \| Q) = \sum_{x \in \mathcal{X}}P(x) \log \left( \frac{P(x)}{Q(x)}\right)
\end{align*}
The following proposition on the KL-divergence is folklore.
\begin{proposition}[Factorization Property]
\label{prop:factorizationKL}
Let $P$ and $Q$ two distributions over $\mathbb{R}^{d \times n}$ such that the distributions can be factorized, i.e. $P = P_1 \times \ldots \times P_n$ and $Q = Q_1 \times \ldots \times Q_n$. Then, we have that
\begin{align*}
\KL(P \| Q) = \sum_{i=1}^n \KL(P_i \| Q_i)
\end{align*}
\end{proposition}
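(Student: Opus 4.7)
The plan is to prove the factorization identity directly from the integral definition of $\KL$, by using the multiplicative structure of product densities, the additivity of logarithm, and Fubini's theorem to separate the integral over $\mathbb{R}^{d\times n}$ into a sum of one-dimensional (per-component) contributions. I will present the argument for the continuous case; the discrete case is identical with sums replacing integrals.

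First, I would write the product structure out explicitly: for $x = (x_1,\ldots,x_n) \in \mathbb{R}^{d\times n}$,
\begin{equation*}
P(x) = \prod_{i=1}^{n} P_i(x_i), \qquad Q(x) = \prod_{i=1}^{n} Q_i(x_i),
\end{equation*}
so that, wherever $P(x) > 0$,
\begin{equation*}
\log\frac{P(x)}{Q(x)} = \sum_{i=1}^{n} \log\frac{P_i(x_i)}{Q_i(x_i)}.
\end{equation*}
Plugging this into the definition of $\KL(P\|Q)$ gives
\begin{equation*}
\KL(P\|Q) = \int_{\mathbb{R}^{d\times n}} P(x)\sum_{i=1}^{n} \log\frac{P_i(x_i)}{Q_i(x_i)}\, dx.
\end{equation*}

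Next, I would swap the sum and the integral (justified since each term of the sum is $P$-integrable whenever the total $\KL$ is finite; otherwise both sides are $+\infty$ and the identity holds trivially), and for each fixed $i$ apply Fubini to integrate out the coordinates $x_j$ with $j\neq i$. Since $P$ factorizes and each $P_j$ is a probability density with $\int P_j(x_j)\,dx_j = 1$, the integration over the $n-1$ coordinates other than $i$ leaves exactly the marginal $P_i(x_i)$ multiplying $\log(P_i(x_i)/Q_i(x_i))$. Hence each term reduces to
\begin{equation*}
\int_{\mathbb{R}^d} P_i(x_i)\log\frac{P_i(x_i)}{Q_i(x_i)}\, dx_i = \KL(P_i\|Q_i),
\end{equation*}
and summing over $i$ yields the claim.

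The only real subtlety, and the step I would be most careful with, is the treatment of the degenerate case in which $P$ is not absolutely continuous with respect to $Q$: one must check that $P_i \not\ll Q_i$ for some $i$ if and only if $P\not\ll Q$, so that the identity $+\infty = +\infty$ also holds. This is handled by noting that the set $\{x : Q(x)=0\}$ decomposes as a union of product sets along coordinates where some $Q_i(x_i)=0$, so $P$ places positive mass there iff some $P_i$ places positive mass on $\{x_i : Q_i(x_i)=0\}$. With this edge case dispatched, the Fubini/log-factorization argument above gives a complete proof, and no further technical machinery is needed.
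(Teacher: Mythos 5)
Your proof is correct: it is the standard argument — factor the densities, use additivity of the logarithm, and apply Fubini together with $\int P_j = 1$ to integrate out the other coordinates, reducing each term to $\KL(P_i\|Q_i)$. The paper states this proposition as folklore and gives no proof at all, so there is nothing to compare against; your treatment, including the check that $P \ll Q$ fails exactly when some $P_i \not\ll Q_i$, is if anything more careful than what the paper relies on.
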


The following relation will prove useful.
\begin{proposition}
\label{prop:auxiliary}
For any $-1 < x < 1$, we have that
\begin{align*}
(1+x)\log(1+x) + (1-x)\log(1-x) \leq 2x^2
\end{align*}
\end{proposition}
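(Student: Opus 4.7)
Let $\phi(x) = (1+x)\log(1+x) + (1-x)\log(1-x)$. The target inequality is $\phi(x) \leq 2x^2$ for $-1 < x < 1$. Since $\phi$ is an even function with $\phi(0) = \phi'(0) = 0$, I will work through a convergent power-series expansion around $0$ and then bound the tail.

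\textbf{Series expansion.} Starting from $\log(1+x) = \sum_{n \geq 1}(-1)^{n-1} x^n / n$, valid for $|x| < 1$, multiplying by $1 + x$ and collecting terms gives
\[
(1+x)\log(1+x) = x + \sum_{n=2}^{\infty} \frac{(-1)^n}{n(n-1)}\, x^n.
\]
Replacing $x$ by $-x$ yields
\[
(1-x)\log(1-x) = -x + \sum_{n=2}^{\infty} \frac{1}{n(n-1)}\, x^n.
\]
Adding these two expansions, the linear terms cancel and the odd-order terms vanish (because $(-1)^n + 1 = 0$ for odd $n$), while even-order terms pair up with coefficient $2/(n(n-1))$. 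Writing $n = 2m$ gives the clean identity
\[
\phi(x) = \sum_{m=1}^{\infty} \frac{x^{2m}}{m(2m-1)}.
\]

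\textbf{Tail bound.} The $m=1$ term equals exactly $x^2$. For $|x| < 1$ we have $x^{2m} \leq x^4$ for every $m \geq 2$, so
\[
\phi(x) - x^2 = \sum_{m=2}^{\infty} \frac{x^{2m}}{m(2m-1)} \leq x^4 \sum_{m=2}^{\infty} \frac{1}{m(2m-1)}.
\]
Since $2m-1 \geq m$ for $m \geq 1$, the constant $\sum_{m \geq 2} 1/(m(2m-1)) \leq \sum_{m \geq 2} 1/m^2 = \pi^2/6 - 1 < 1$. Combined with $x^4 \leq x^2$ for $|x| < 1$, this yields $\phi(x) - x^2 \leq x^2$, i.e., $\phi(x) \leq 2x^2$, as required.

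\textbf{Anticipated obstacles.} There is essentially no obstacle here: the argument is a routine power-series manipulation followed by an elementary tail estimate. The only minor care needed is verifying the coefficient identity after adding the two series (so that odd powers cancel and even powers consolidate to $1/(m(2m-1))$); once this is in place, the bound is immediate. An alternative, equally short route would be to note $\phi(0) = \phi'(0) = 0$ and analyze $g(x) = \phi(x) - 2x^2$ via $g''(x) = 2/(1-x^2) - 4$: $g$ is concave on $|x| \leq 1/\sqrt{2}$ (so $g \leq 0$ there by the vanishing derivatives at $0$) and convex on $|x| \in [1/\sqrt{2}, 1)$, where a convex function attains its supremum at the endpoints, both of which give $g \leq 0$ (with $\lim_{x \to 1^-} g(x) = 2\log 2 - 2 < 0$). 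I prefer the series proof because it avoids the piecewise endpoint check.
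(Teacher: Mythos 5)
Your proof is correct. The series identity $(1+x)\log(1+x)+(1-x)\log(1-x)=\sum_{m\ge 1}\frac{x^{2m}}{m(2m-1)}$ checks out, the $m=1$ term is exactly $x^2$, and the tail estimate via $x^{2m}\le x^4\le x^2$ together with $\sum_{m\ge 2}\frac{1}{m(2m-1)}\le \pi^2/6-1<1$ is valid, so $\phi(x)\le 2x^2$ on $(-1,1)$. The paper states \Cref{prop:auxiliary} without proof, treating it as a standard fact, so there is no argument in the paper to compare against; your self-contained derivation (and the alternative second-derivative sketch you mention) supplies exactly the kind of elementary justification the paper leaves implicit.
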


\section{Average minimax risk of online estimation of discrete densities (\Cref{thm:discrete-cumulative-risk})}

\subsection{Proof of the upper bound}
Since the supremum is a convex function, we have that
\begin{align*}
\Pi_n(\Delta,k)  \leq \inf_{\hat{\theta}_1,\ldots,\hat{\theta}_n} \frac{1}{n} \sum_{t=1}^{n} \sup_{S \in \mathcal{S}_t(\Delta,k) }\Exp_{\bm{X}_t \sim S} \TV(\hat{\theta}_t, \theta_t(S))
\end{align*}
Let $\tilde{\theta}_t(\bm{X}_t)$ be the estimator  described in \Cref{sub:discrete-ub} for the estimation at time $t$ that achieves the upper bound of \Cref{thm:discrete}. 
By using those estimators for $t=1,\ldots,n$, the above expression can be upper bounded as
\begin{align*}
\Pi_n(\Delta,k) \leq \frac{1}{n} \sum_{t=1}^{n} \sup_{S \in \mathcal{S}_t(\Delta,k) }\Exp_{\bm{X}_t \sim S} \TV(\tilde{\theta}_t(\bm{X}_t), \theta_t(S))
\end{align*}

As previously discussed, for any $t \gtrsim (k/\Delta^2)^{1/3}$, we have that the expected error of the estimator $\tilde{\theta}_t(\bm{X}_t)$ to estimate $P_t$ is upper bounded by $O( (k \Delta)^{1/3})$ in the case of a bounded drift $\Delta$ at each step. By taking $n \rightarrow \infty$, we can conclude that $\Pi(\Delta,k) = O( (k \Delta)^{1/3})$.

\subsection{Proof of the lower bound} Let $n = \nu m$, and let $\nu = 2(k/\Delta^2)^{1/3}$. Let  $\mathcal{B} = \{ B_w : w \in \{0,1\}^{k/2} \}$ be a family of product distributions over $\mathcal{X}^{\nu}$ constructed as follows. The set $\mathcal{B}$ represents the family of possible candidate block distributions that we will use to build our lower bound. For any $B_w = P'_{w,1} \times \ldots \times P'_{w,\nu}$ and $j \in [k]$, we let 
\begin{align}
\label{eq:def-block}
P'_{w,i}(j) &=  \begin{cases}\frac{1}{k} + (-1)^j w_{\lceil j/2 \rceil}\frac{\Delta (i-1)}{k}   \hspace{10pt} &\mbox{ if } i \leq \nu/2 \\
P'_{w,\nu-i+1}(j)  &\mbox{ if } i > \nu/2 \end{cases}
\end{align}
We can observe that the first $\nu/2$ components of each product distribution share similarities with the family of product distributions defined for the lower bound construction of \Cref{sub:lb-discrete} for the special case of bounded drift at each step. 

This is a properly defined family of product distributions. In fact, for any $w$ and $i$, we have that  $\sum_{j} P_{w,i}(j) = 1$. Moreover, we have that $|(-1)^j w_{\lceil j/2 \rceil} \Delta (i-1)/k| \leq \Delta \nu/(2k) = \Delta^{1/3}k^{1/3}/k \leq 1/k$ due to the assumption $\Delta \in (0,1/k)$. Hence, $P_{w,i}(j) \in [0,1]$.

We can also observe that $\mathcal{B} \subseteq \mathcal{S}_{\nu}(\Delta,k)$. In fact, for any two consecutive distributions $P_{w,i}$ and $P_{w,i+1}$, we have that
\begin{align*}
\TV(P_{w,i}, P_{w,i+1}) \leq \frac{1}{2} \sum_{j = 1}^{k/2} \frac{2 \Delta}{k} \mathbf{1}_{\{w_j=1\}} \leq \Delta
\end{align*}
Observe that for any sequence of blocks $B_1,\ldots,B_{\ell} \in \mathcal{B}$, the product distribution $B_1 \times \ldots \times B_{\ell} \in \mathcal{S}_{\nu \ell}(\Delta,k)$. This property will be exploited later in the proof.

The next lemma shows that estimating the last block of a distribution $S = B_1 \times \ldots \times B_m$ is hard for any $m \geq 1$. It is proven by using Assouad's Lemma, and it is the core technical result that enables the analysis. 
\begin{lemma}
\label{lemma:block-lower-bound}
Let $\ell \in [m]$. Let $B_1,\ldots, B_{\ell-1}$ be any $\ell-1$ blocks from $\mathcal{B}$, and let $S' = B_1 \times \ldots \times B_{\ell-1}$. We have:
\begin{align*}
\inf_{ \hat{\theta}_{\nu(\ell-1)+1}, \ldots, \hat{\theta}_{\nu \ell}} \max_{\substack{S = S' \times B : \\ B \in \mathcal{B}}} \Exp_{\bm{X}_{\ell \nu} \sim S} \left[ \sum_{i = \nu(\ell-1)+1}^{\nu \ell} \frac{\TV( \hat{\theta}_i(\bm{X}_i), \theta_i(S))}{\nu} 
\right]= \Omega\left( (k\Delta)^{1/3}\right) 
\end{align*}
\end{lemma}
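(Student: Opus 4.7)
The plan is to apply Assouad's Lemma (\Cref{assouad}) to the hypercube family $\{S_w = S' \times B_w : w \in \{0,1\}^{k/2}\}$, once for each time step $t \in \{\nu(\ell-1)+1,\ldots,\nu\ell\}$ within the last block, and then combine the per-step lower bounds via a Bayesian argument on the hypercube. Writing $t' = t - \nu(\ell-1) \in [\nu]$ and $\alpha_{t'} = \min(t'-1,\nu-t')$, the block construction \eqref{eq:def-block} can be written uniformly as $P'_{w,t'}(j) = 1/k + (-1)^j w_{\lceil j/2 \rceil}\Delta\alpha_{t'}/k$. The same computation as in \Cref{prop:discrete-techn1} then gives $\TV(\theta_t(S_w),\theta_t(S_v)) = h(w,v)\Delta\alpha_{t'}/k$. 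Using the factorization property (\Cref{prop:factorizationKL}), the fact that the first $\nu(\ell-1)$ components are shared, and \Cref{prop:auxiliary} exactly as in \Cref{prop:discrete-techn2}, I obtain that for any $w,v$ with $h(w,v)=1$
\begin{align*}
\KL(S_w\|S_v) \;\leq\; \frac{2\Delta^2}{k}\sum_{i'=1}^{\nu}\alpha_{i'}^2 \;=\; O\!\left(\frac{\Delta^2\nu^3}{k}\right) \;=\; O(1),
\end{align*}
where the last equality uses $\nu = 2(k/\Delta^2)^{1/3}$ and $\sum_{i'=1}^\nu \alpha_{i'}^2 = \Theta(\nu^3)$. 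The hypothesis $\Delta \in (0,1/k)$ also guarantees $\Delta\alpha_{t'} < 1$ so that \Cref{prop:auxiliary} applies.

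Plugging these quantities into Assouad's Lemma with $p=1$ and $m=k/2$ yields a per-step lower bound of order $\Delta\alpha_{t'}$. Crucially, the standard proof of Assouad's Lemma lower-bounds the Bayes risk with uniform prior on the hypercube, so in fact
\begin{align*}
\inf_{\hat{\theta}_t}\; \frac{1}{2^{k/2}}\sum_{w\in\{0,1\}^{k/2}} \Exp_{S_w}\!\left[\TV\bigl(\hat{\theta}_t(\bm{X}_t),\, P'_{w,t'}\bigr)\right] \;=\; \Omega(\Delta\alpha_{t'}).
\end{align*}
This Bayes-form strengthening is the key enabler: combining (i) $\max_{B\in\mathcal{B}} \geq 2^{-k/2}\sum_w$, (ii) linearity of expectation, and (iii) the fact that the $\nu$ different estimators $\hat{\theta}_{\nu(\ell-1)+1},\ldots,\hat{\theta}_{\nu\ell}$ can be minimized independently inside a sum, I obtain
\begin{align*}
\inf_{\hat{\theta}_{\nu(\ell-1)+1},\ldots,\hat{\theta}_{\nu\ell}}\; \max_{B\in\mathcal{B}}\; \Exp_{S'\times B}\!\left[\frac{1}{\nu}\sum_{t=\nu(\ell-1)+1}^{\nu\ell} \TV\bigl(\hat{\theta}_t(\bm{X}_t),\, \theta_t(S'\times B)\bigr)\right] \;\geq\; \frac{1}{\nu}\sum_{t'=1}^{\nu} \Omega(\Delta\alpha_{t'}).
\end{align*}

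Since $\sum_{t'=1}^{\nu}\alpha_{t'} = \Theta(\nu^2)$, the right-hand side simplifies to $\Omega(\Delta\nu) = \Omega\!\bigl(\Delta\cdot(k/\Delta^2)^{1/3}\bigr) = \Omega\!\bigl((k\Delta)^{1/3}\bigr)$, which is the claimed bound. The main technical subtlety is step (iii): the infimum can be pulled inside the time-average only because each $\hat{\theta}_t$ contributes to a single summand, and the Bayes-risk form of Assouad's Lemma ensures that the resulting per-$t$ Bayes risks remain lower-bounded. Apart from this, the argument is a direct generalization of the single-time lower bound of \Cref{sub:lb-discrete}, applied $\nu$ times and averaged; the choice $\nu = 2(k/\Delta^2)^{1/3}$ precisely balances the KL budget (which must remain $O(1)$) against the typical drift $\Delta\alpha_{t'}$ realized within a block.
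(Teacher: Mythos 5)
Your argument is correct but takes a genuinely different route from the paper. The paper collapses the $\nu$ per-step risks into a \emph{single} application of Assouad's Lemma: it introduces the auxiliary pseudo-metric $d_b(V,W)=(1/\nu)\sum_{i=1}^{\nu}\TV(V_i,W_i)$ on product distributions over $\mathcal{X}^{\nu}$, takes the whole last block $\theta_{\#}(S)=P_{(\ell-1)\nu+1}\times\cdots\times P_{\ell\nu}$ as the estimand, observes that any sequence of online estimators $\hat{\theta}_{\nu(\ell-1)+1},\ldots,\hat{\theta}_{\nu\ell}$ induces one estimator $\overline{\theta}$ of $\theta_{\#}(S)$ whose $d_b$-risk is exactly the per-step average, and computes the block separation $d_b(\theta_{\#}(S_w),\theta_{\#}(S_v))\geq \tfrac{\nu\Delta}{2k}h(w,v)$ before invoking \Cref{assouad} once. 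You instead apply Assouad at each time $t$ and average. Your central observation — that this requires the Bayes-risk (uniform hypercube prior) form rather than the minimax statement of \Cref{assouad}, because $\max_B \sum_t(\cdot)$ does not dominate $\sum_t \max_B(\cdot)$ — is exactly right, and since the proof of Assouad's Lemma begins by replacing the sup over $\{0,1\}^{k/2}$ with the hypercube average, the Bayes form is available at no extra cost from the paper's cited reference. Your per-step separation $h(w,v)\Delta\alpha_{t'}/k$ and KL estimate $\tfrac{2\Delta^2}{k}\sum_{i'}\alpha_{i'}^2 = O(1)$ match the paper's calculations; the one implicit point worth stating is that the per-step Assouad bound at time $t$ should involve the KL between the marginals of $\bm{X}_t$, which by the factorization property (\Cref{prop:factorizationKL}) is a partial sum of your full-block KL and hence dominated by it, so your bound is valid. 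In sum, your route avoids inventing $d_b$ and $\theta_{\#}$ at the price of invoking the Bayes-form strengthening of Assouad, while the paper keeps the stated minimax form of \Cref{assouad} at the price of the custom block distance; both correctly tune $\nu=2(k/\Delta^2)^{1/3}$ to keep the KL budget $O(1)$ while making the average drift $\Theta(\nu\Delta)=\Theta((k\Delta)^{1/3})$.
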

\begin{proof}
It is convenient to rewrite the left-hand side of the equation of the lemma. Let $V = V_1 \times \ldots \times V_\nu$ and $W = W_1 \times \ldots \times W_\nu$ be two product distributions. We define the distance  $d_b(V,W) \doteq (1/\nu) \sum_{i=1}^{\nu}\TV(W_i,V_i)$. Given a product distribution $S = P_1 \times \ldots \times P_{\ell \nu}$, we define $\theta_{\#}(S) = P_{(\ell-1)\nu+ 1} \times \ldots \times P_{\ell \nu}$ as the product distribution of the last $\nu$ components of $S$.
Let $\overline{\theta}(\bm{X}_{\nu\ell})$ be any estimator of the product distribution $\theta_{\#}(S) \in \mathcal{B}$. Observe that $\overline{\theta}(\bm{X}_{\ell \nu}) = \hat{\theta}_{\nu(\ell-1)+1}( \bm{X}_{\nu(\ell-1)+1}) \times \ldots \times  \hat{\theta}_{\nu \ell}(\bm{X}_{\nu\ell})$  is a possible estimator for $\theta_{\#}(S)$. Hence, we have that
\begin{align}
\label{eq:temporaneo1}
\inf_{ \hat{\theta}_{\nu(\ell-1)+1}, \ldots, \hat{\theta}_{\nu \ell}} \max_{\substack{S = S' \times B : \\ B \in \mathcal{B}}} \Exp_{\bm{X}_{\ell \nu} \sim S} \left[ \sum_{i = \nu(\ell-1)+1}^{\nu \ell} \frac{\TV( \hat{\theta}_i(\bm{X}_i), \theta_i(S))}{\nu} 
\right] \geq  \inf_{ \overline{\theta}} \max_{\substack{S = S' \times B : \\ B \in \mathcal{B}}} \Exp_{\bm{X}_{\ell \nu} \sim S}  d_b( \overline{\theta}(\bm{X}_{\nu \ell}), \theta_{\#}(S))
\end{align}
Consider the family of product distributions $\{ S_w = S' \times B_w : w \in \{0,1\}^{k/2}\}$ over the hypercube $\{0,1\}^{k/2}$, where $S'$ is defined in the statement of the Lemma. We want to invoke Assouad's Lemma on the estimation problem
\begin{align*}
\inf_{ \overline{\theta}} \max_{S_w : w \in \{0,1\}^{k/2}} \Exp_{\bm{X}_{\ell \nu} \sim S}  d_b( \overline{\theta}(\bm{X}_{\nu \ell}), \theta_{\#}(S))
\end{align*}
that is equal to the right-hand side of \eqref{eq:temporaneo1}.

Let $w,w' \in \{0,1\}^k$. We want to compute $d_b( \theta_{\#}(S_w), \theta_{\#}(S_{w'}))$. By using the definition of $\theta_{\#}$, we have that $d_b( \theta_{\#}(S_w), \theta_{\#}(S_{w'})) = d_b( B_w, B_{w'})$. Let $B_{w} = P'_{w,1} \times \ldots \times P'_{w,\nu}$ and $B_{w'} = P'_{w',1} \times \ldots \times P'_{w',\nu}$ be defined as in \eqref{eq:def-block}. We have that
\begin{align*}
d_b( \theta_{\#}(S_w), \theta_{\#}(S_{w'})) =  \frac{1}{\nu} \sum_{i=1}^{\nu} \TV( P'_{w,i}, P'_{w',i}) = \frac{2}{\nu} \sum_{i=1}^{\nu/2}\TV( P'_{w,i}, P'_{w',i}) &= \frac{1}{\nu} \sum_{i=1}^{\nu/2} \frac{2 (i-1)\Delta}{k} h(w,w') \\
& \geq \frac{\nu \Delta}{2k} h(w,w')
\end{align*}

Let $w,w' \in \{0,1\}^k$ such that $h(w,w') = 1$. Let $q$ denote the bit in which $w$ and $w'$ differ, and assume that $w_q = 1$ and $w'_q=0$. We have that
\begin{align*}
\KL(S_w \| S_{w'}) =  \sum_{i=1}^{\nu} \KL( P'_{w,i} \| P'_{w',i}) 
\end{align*}
due to the factorization property of the KL-divergence \Cref{prop:factorizationKL}, and the fact that the first $\nu(\ell-1)$ components of the product distributions $S_w$ and $S_{w'}$ coincide by construction. We have that
\begin{align*}
\sum_{i=1}^{\nu} \KL( P'_{w,i} \| P'_{w',i}) = \sum_{i=1}^{\nu} \KL( P'_{w,i} \| P'_{w',i}) & =2 \sum_{i=1}^{\nu/2} \sum_{j =1}^k P'_{w,i}(j) \log\left( \frac{P'_{w,i}(j)}{P'_{w',i}(j)} \right) \\ 
\end{align*}
Since $w$ and $w'$ only differ on the bit $q$, we have $P'_{w,i}(j) \neq P'_{w',i}(j)$ only if $j = 2q$ or $j = 2q-1$. By using the definition of $P'_{w,i}$ and $P'_{w',i}$, we obtain
\begin{align*}
\KL(S_w \| S_{w'}) &= 2 \sum_{i=0}^{\nu/2-1}\left\{ \left( \frac{1}{k} +  \frac{\Delta i}{ k}\right) \log\left( 1 + i \Delta \right) + \left( \frac{1}{k} - \frac{\Delta i}{k}\right) \log\left(1 - {i \Delta}\right) \right\} \\
&= \frac{2}{k} \sum_{i=0}^{\nu/2-1}\left\{ \left( 1 +  {\Delta i}\right) \log\left( 1 + i \Delta \right) + \left( 1 - {\Delta i}\right) \log\left(1 - {i \Delta}\right) \right\} \\
&\leq \frac{4\Delta^2}{k} \sum_{i = 0}^{\nu/2-1}i^2 = O\left( \frac{\Delta^2 \nu^3}{k} \right)
\end{align*}
where in the first inequality we used \Cref{prop:auxiliary}. By definition of $\nu$, we have that $\nu^3 = 8k/\Delta^2$, hence  $\KL(S_w \| S_{w'}) = O(1)$. Therefore, we can apply Assouad's Lemma (\Cref{assouad}), and obtain that
\begin{align*}
\inf_{ \overline{\theta}} \max_{S_w : w \in \{0,1\}^{k/2}} \Exp_{\bm{X}_{\ell \nu} \sim S}  d_b( \overline{\theta}(\bm{X}_{\nu \ell}), \theta_{\#}(S)) \geq \frac{\nu \Delta}{16 e^{O(1)}} = \Omega\left( (k \Delta)^{1/3}\right)
\end{align*}
where the last equality follows by substituting the definition of $\nu$.
\end{proof}

The lower bound of \Cref{thm:discrete-cumulative-risk} is obtained by iteratively applying \Cref{lemma:block-lower-bound} to construct a hard distribution given any sequence of estimators $\hat{\theta}_1, \ldots \hat{\theta}_n$.

\begin{proof}[Proof of the lower bound of \Cref{thm:discrete-cumulative-risk}]
Let $n = \nu m$, and let $\nu$ and $\mathcal{B}$ be defined as within this section. We want to show a lower bound to
\begin{align*}
\Pi_n(k,\Delta) = \inf_{\hat{\theta}_1,\ldots,\hat{\theta}_n} \sup_{S \in \mathcal{S}_n(\Delta,k)} \Exp_{\bm{X}_n \sim S}\sum_{t=1}^n \frac{\TV( \hat{\theta}_t(\bm{X}_t), \theta_t(S))}{n}
\end{align*}
We prove a lower bound as follows. For any sequence of estimators $\hat{\theta}_1,\ldots,\hat{\theta}_n$, we show how to construct a sequence of blocks $S^* = B_1^* \times \ldots \times B^*_m$ from $\mathcal{B}$ such that
\begin{align*}
\Exp_{\bm{X}_n \sim S^*}\sum_{t=1}^n \frac{\TV( \hat{\theta}_t(\bm{X}_t), \theta_t(S^*))}{n} = \Omega\left( (k \Delta)^{1/3}\right) \enspace.
\end{align*}
Since $S^* \in \mathcal{S}_{n}(\Delta,k)$, this is sufficient to prove the lower bound of the theorem.

Fix a sequence of estimators $\hat{\theta}_1,\ldots,\hat{\theta}_n$. We construct $B_1^*, \ldots, B^*_m$ iteratively as follows. We let
\begin{align*}
B^*_1  = \argmax_{\substack{B \in \mathcal{B}}} \Exp_{\bm{X}_{\nu} \sim B} \left[ \sum_{i = 1}^{\nu} \frac{\TV( \hat{\theta}_i(\bm{X}_i), \theta_i(B))}{\nu} 
\right]
\end{align*}
For any $\ell \in \{2 ,\ldots ,m\}$, we let $S^*_{(\ell-1)} = B_1^* \times \ldots \times B_{\ell-1}^*$ and 
\begin{align*}
B^*_{\ell} = \argmax_{B \in \mathcal{B}}\Exp_{\bm{X}_{\ell \nu} \sim S^*_{(\ell-1)}  \times B} \left[ \sum_{i = \nu(\ell-1)+1}^{\nu \ell} \frac{\TV( \hat{\theta}_i(\bm{X}_i), \theta_i(S^*_{(\ell-1)} \times B))}{\nu} 
\right]
\end{align*}
Let $S^* = S^*_{(m)} = B^*_1 \times \ldots \times B^*_m$. We have that due to the way that  $B_1^* \times \ldots \times B_{\ell-1}^*$ are defined, \Cref{lemma:block-lower-bound} applies, and for any $\ell \in [m]$, it holds
\begin{align}
\label{eq:tmptmptmp11}
\Exp_{\bm{X}_{\ell \nu} \sim S^*_{(\ell)}} \left[ \sum_{i = \nu(\ell-1)+1}^{\nu \ell} \frac{\TV( \hat{\theta}_i(\bm{X}_i), \theta_i(S^*_{(\ell)}))}{\nu}  \right] = \Omega\left( (k \Delta)^{1/3}\right)
\end{align}
By using linearity of expectation, we have that
\begin{align*}
\Exp_{\bm{X}_n \sim S^*}\sum_{t=1}^n \frac{\TV( \hat{\theta}_t(\bm{X}_t), \theta_t(S^*))}{n}  = \frac{1}{m}\sum_{\ell=1}^{m} \Exp_{\bm{X}_{\nu \ell} \sim S^*_{(\ell)} } \left[ \frac{1}{\nu} \sum_{i = (\ell-1)\nu+1}^{\nu \ell}\TV( \hat{\theta}_i(\bm{X}_i), \theta_i(S^*_{(\ell)})\right] &= \frac{1}{m}\sum_{\ell=1}^m \Omega\left( (k \Delta)^{1/3}\right) \\&= \Omega\left( (k \Delta)^{1/3}\right)  \enspace .
\end{align*}
\end{proof}

\section{Smooth Density Estimation under Distribution Drift}

\newcommand{\fe}{\hat{f}_{\bm{\alpha}}}
\newcommand{\ft}{f_{\bm{\alpha}}}
\newcommand{\pe}{\hat{P}^r_{K,h}}
\newcommand{\pt}{P_{[r]}}
\newcommand{\SD}{\mathcal{P}_{\mathcal{H}}(\beta,L)}

\subsection{Upper Bound of \Cref{thm:smooth}}
The structure of the proof of the upper bound uses technical ideas from the analysis of kernel density estimator in a non drift setting \cite{tsybakov2009introduction}. Due to the length of the analysis, the proof is broken down into multiple lemmas. For the remaining of this subsection, let $P_1 \times \ldots \times P_n = S \in \mathcal{S}_n(\bm{\Delta}_n,\beta)$, where $\mathcal{S}$ is defined as in \Cref{def:family-smooth}. 
For any $1 \leq r \leq n$, let $\pt$ be the average of the densities $P_{n-r+1}, \ldots, P_n$, i.e. $\pt = (1/r) \sum_{i=n-r+1}^n P_i$.

For any $x \in \mathbb{R}$, we define the following quantities
\begin{align*}
    &b(x)  = \Exp_{\bm{X}_n \sim S}\left[  \pe(x)\right] - \pt(x) \hspace{50pt}& \text{Bias of the estimator}  \\
    &\sigma^2(x) = \Exp_{\bm{X}_n \sim S}  \left(  \pe(x)  - \Exp_{\bm{X}_n \sim S} \pe(x) \right)^2 & \text{Variance of the estimator} \\
    &d^2(x) = \left( \pt(x) - P_n(x) \right)^2 & \text{Drift error}
\end{align*}

We can obtain the following error decomposition based on the above quantities.

\begin{proposition}[Error Decomposition]
\label{ubkde:prop1}
We have that
\begin{align*}
   \Exp \lVert P_n - \pe \rVert^2 = \Exp\int \left(\pe(x) - P_n(x)\right)^2 dx \leq  2\int d^2(x)dx + 2 \int b^2(x) dx + 2\int \sigma^2(x)dx
\end{align*}
\end{proposition}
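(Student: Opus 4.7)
The plan is to reduce the claim to a standard bias--variance decomposition by first separating the pointwise error into an ``estimator-vs-average'' piece and a pure drift piece. Concretely, for every $x \in \mathbb{R}$ I would write
\begin{align*}
\hat{P}^{r}_{K,h}(x) - P_n(x) = \bigl(\hat{P}^{r}_{K,h}(x) - P_{[r]}(x)\bigr) + \bigl(P_{[r]}(x) - P_n(x)\bigr),
\end{align*}
and apply the elementary inequality $(u+v)^2 \leq 2u^2 + 2v^2$ to obtain
\begin{align*}
\bigl(\hat{P}^{r}_{K,h}(x) - P_n(x)\bigr)^2 \leq 2\bigl(\hat{P}^{r}_{K,h}(x) - P_{[r]}(x)\bigr)^2 + 2 d^2(x),
\end{align*}
using the definition of $d^2(x)$ for the second summand. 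Integrating in $x$ (legitimate because the integrand is non-negative, so Tonelli lets me interchange $\int$ and $\Exp$ at the next step) yields
\begin{align*}
\lVert \hat{P}^{r}_{K,h} - P_n \rVert^2 \leq 2 \int \bigl(\hat{P}^{r}_{K,h}(x) - P_{[r]}(x)\bigr)^2 dx + 2 \int d^2(x)\, dx.
\end{align*}

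Next I would handle the first term by the standard bias--variance identity applied pointwise. Writing
\begin{align*}
\hat{P}^{r}_{K,h}(x) - P_{[r]}(x) = \bigl(\hat{P}^{r}_{K,h}(x) - \Exp \hat{P}^{r}_{K,h}(x)\bigr) + b(x),
\end{align*}
where $b(x) = \Exp \hat{P}^{r}_{K,h}(x) - P_{[r]}(x)$ is deterministic, the cross term vanishes under expectation since the first summand is centered. Thus
\begin{align*}
\Exp \bigl(\hat{P}^{r}_{K,h}(x) - P_{[r]}(x)\bigr)^2 = \sigma^2(x) + b^2(x).
\end{align*}
Taking expectation in the integrated inequality above and invoking Tonelli to pull $\Exp$ inside the integral then gives exactly
\begin{align*}
\Exp \lVert \hat{P}^{r}_{K,h} - P_n \rVert^2 \leq 2\int d^2(x)\, dx + 2\int b^2(x)\, dx + 2\int \sigma^2(x)\, dx.
\end{align*}

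There is no real obstacle here: the only mildly delicate points are the measurability/non-negativity justifications needed to swap $\Exp$ and $\int$ (Tonelli), and making sure the ``drift'' split is taken against $P_{[r]}$ rather than $\Exp \hat{P}^{r}_{K,h}$, so that the bias term matches the definition used later in the paper. The rest is the textbook decomposition, yielding the stated factor of $2$ on each of the drift, bias, and variance terms.
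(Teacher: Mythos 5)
Your proof is correct and follows essentially the same route as the paper's: split the pointwise error as $(\hat{P}^r_{K,h}(x)-P_{[r]}(x)) + (P_{[r]}(x)-P_n(x))$, apply $(u+v)^2 \leq 2u^2+2v^2$, invoke the bias--variance identity on the first summand, and use Tonelli--Fubini to exchange $\Exp$ and $\int$. The only cosmetic difference is the order of operations (the paper swaps $\Exp$ and $\int$ before decomposing; you decompose pointwise first and swap at the end), which is immaterial.
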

\begin{proof}
By Tonelli-Fubini theorem, we can swap the integral and the expectation. We observe that
\begin{align*}
    \Exp \left(\pe(x) - P_n(x)\right)^2 &= \Exp \left( \pe(x) - \pt(x) + \pt(x) - P_n(x) \right)^2 \\
    &\leq 2\Exp( \pe(x) - \pt(x))^2 + 2( \pt(x) - P_n(x))^2 \\
    &= 2\Exp( \pe(x) - \pt(x))^2 + 2d^2(x)
\end{align*}
The first inequality is due to the fact that $(x+y)^2 \leq 2x^2+2y^2$ for any $x,y \in \mathbb{R}$. We can use bias-variance decomposition and obtain that
\begin{align*}
    \Exp( \pe(x) - \pt(x))^2 =  b^2(x) + \sigma^2(x) \enspace .
\end{align*}
\end{proof}

In the next three propositions, we will now individually upper bound each source of error.
\begin{proposition}[Upper bound on drift error]
\label{ubkde:prop2}
\begin{align*}
    \int d^2(x) dx = O(\Delta^2_{n-r+1})
\end{align*}
\end{proposition}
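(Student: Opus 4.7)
The plan is to mirror the proof of \Cref{prop:discrete-drift-error} from the discrete setting, substituting the $L_2$ metric for total variation. The quantity $\int d^2(x)\,dx$ is exactly $\lVert P_{[r]} - P_n \rVert^2$, so the task reduces to upper bounding $\lVert P_{[r]} - P_n \rVert$ by $\Delta_{n-r+1}$.

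First, I would rewrite
\begin{align*}
P_{[r]} - P_n = \frac{1}{r}\sum_{i=n-r+1}^n (P_i - P_n),
\end{align*}
using the fact that the constant $P_n$ can be pulled into the average over $r$ terms. Then by the triangle inequality for the $L_2$ norm,
\begin{align*}
\lVert P_{[r]} - P_n \rVert \leq \frac{1}{r}\sum_{i=n-r+1}^n \lVert P_i - P_n \rVert.
\end{align*}
Now I invoke the regular drift assumption: $S \in \mathcal{S}_n(\bm{\Delta}_n,\beta)$ means $\bm{\Delta}_n$ is a regular drift sequence for $S$ with respect to the $L_2$ metric, so $\lVert P_i - P_n \rVert \leq \Delta_i$ for every $i$. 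Combined with the monotonicity of the drift sequence ($\Delta_i \leq \Delta_{n-r+1}$ whenever $i \geq n-r+1$), the averaged bound collapses to $\Delta_{n-r+1}$.

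Squaring yields $\int d^2(x)\,dx = \lVert P_{[r]} - P_n\rVert^2 \leq \Delta_{n-r+1}^2$, which is the desired $O(\Delta_{n-r+1}^2)$ bound. There is no real obstacle here: the proof is a one-line triangle inequality plus monotonicity of $\bm{\Delta}_n$, and it is the direct $L_2$ analogue of the discrete-case drift bound. All the work for this theorem's upper bound will live in the variance and bias propositions that follow, where the kernel order $\beta$ and the bandwidth $h$ enter.
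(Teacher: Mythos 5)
Your proof is correct, and it takes a slightly different but cleaner route than the paper's. The paper bounds $d^2(x)$ pointwise via Cauchy--Schwarz (equivalently, Jensen's inequality for $t\mapsto t^2$), getting $[\,P_{[r]}(x)-P_n(x)\,]^2 \le \frac{1}{r}\sum_{i=n-r+1}^n (P_i(x)-P_n(x))^2$, then integrates and applies the drift bound $\int(P_i-P_n)^2 \le \Delta_i^2$ term by term, finally using monotonicity. You instead apply Minkowski's inequality (the triangle inequality for the $L_2$ norm) directly at the level of functions: $\lVert P_{[r]}-P_n\rVert \le \frac{1}{r}\sum_i\lVert P_i-P_n\rVert \le \frac{1}{r}\sum_i\Delta_i \le \Delta_{n-r+1}$, and then square. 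Both yield $\int d^2(x)\,dx \le \Delta_{n-r+1}^2$, and both are one-line convexity-plus-monotonicity arguments; yours has the small advantage of being the verbatim $L_2$ analogue of \Cref{prop:discrete-drift-error}, which makes the parallelism between the discrete and smooth cases more transparent. (Incidentally, your intermediate bound $\frac{1}{r}\sum_i\Delta_i$ is also slightly tighter than what the paper's route produces before the monotonicity step, though this does not change the final conclusion.)
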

\begin{proof}
To prove the statement, we use Cauchy-Schwarz inequality:
\begin{align*}
    [\pt(x) - P_n(x)]^2 &= \left[ \sum_{i =n-r+1}^n \frac{1}{r} \left( P_i(x) - P_n(x) \right) \right]^2  \\
    &= \left[ \sum_{i =n-r+1}^n  \sqrt{\frac{1}{r}}\sqrt{\frac{1}{r}} \left( P_i(x) - P_n(x) \right) \right]^2 \\
    & \leq \left[ \sqrt{\sum_{i =n-r+1}^n 1/r} \cdot \sqrt{\sum_{i =n-r+1}^n (1/r)(P_i(x) - P_n(x))^2} \right]^2 \\
    &= \sum_{i =n-r+1}^n \frac{1}{r} |P_i(x) - P_n(x)|^2
\end{align*}
Therefore, we have that
\begin{align*}
    \int d^2(x) dx &\leq \int\sum_{i =n-r+1}^n \frac{1}{r} |P_i(x) - P_n(x)|^2dx  \\
    &= \sum_{i =n-r+1}^n  \frac{1}{r}\int | P_i(x) - P_n(x)|^2 dx  \\
    &\leq \sum_{i =n-r+1}^n  \frac{\Delta_i^2}{r}
\end{align*}
where the last inequality is due to the assumption on the regular drift $\bm{\Delta}_n$. Since $\Delta_{n-r+1} \leq \Delta_i$ for any $i \geq n-r+1$, we can conclude that $\int d^2(x) dx \leq \Delta^2_{n-r+1}$ \enspace .
\end{proof}

The next two propositions follow by a slight modification of the the proofs from \citet[Proposition 1.4 and 1.5]{tsybakov2009introduction}, as we adapt those results in the setting where each random variable is sampled from a different distribution. For completeness, we report the full proofs, and refer the reader to the previous reference for additional details.
\begin{proposition}[Upper bound on variance error]
\label{ubkde:prop3}
Suppose that the kernel $K$ satisfies 
$m_K = \int K^2(u)du < \infty$.
Then:
\begin{align*}
    \int \sigma^2(x) dx \leq \frac{m_K}{r \cdot h}
\end{align*}
\end{proposition}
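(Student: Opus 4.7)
The plan is to exploit the independence of the $X_i$'s to reduce the integrated variance to a per-sample computation, and then evaluate a single change-of-variable integral involving $K^2$. Since $\pe(x)$ is a sum of independent terms $(rh)^{-1} K((X_i-x)/h)$, the variance splits as
\begin{align*}
\sigma^2(x) = \frac{1}{r^2 h^2} \sum_{i=n-r+1}^n \Var_{X_i \sim P_i}\left[K\left(\frac{X_i - x}{h}\right)\right].
\end{align*}
First I would upper-bound each variance by the corresponding second moment, which costs nothing and removes the nuisance of keeping track of the means: $\Var[Y] \leq \Exp[Y^2]$ applied to $Y = K((X_i-x)/h)$.

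Next, I would integrate in $x$ and swap the order of integration (and expectation) using Tonelli's theorem, justified by non-negativity of $K^2$. This yields
\begin{align*}
\int \sigma^2(x)\,dx \leq \frac{1}{r^2 h^2} \sum_{i=n-r+1}^n \Exp_{X_i \sim P_i}\left[\int K^2\left(\frac{X_i - x}{h}\right) dx\right].
\end{align*}
The inner integral is computed by the substitution $u = (X_i - x)/h$, which gives $dx = -h\,du$ and turns it into $h \int K^2(u)\,du = h \cdot m_K$, independently of the realization of $X_i$ (this is where translation invariance of Lebesgue measure is used). In particular the expectation becomes trivial: each summand equals $h \cdot m_K$.

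Finally I would collect terms: the sum over $r$ indices contributes a factor of $r$, and combining with the prefactor $1/(r^2 h^2)$ gives $\int \sigma^2(x)\,dx \leq \frac{r \cdot h \cdot m_K}{r^2 h^2} = \frac{m_K}{rh}$, as claimed. There is no real obstacle here: the argument is the standard variance bound for a Parzen--Rosenblatt estimator, and the only deviation from the i.i.d.\ setting is that different $X_i$'s have different distributions $P_i$, which is harmless since the bound $\Var \leq \Exp[Y^2]$ followed by the change of variables produces a quantity that does not depend on $P_i$ at all.
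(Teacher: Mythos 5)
Your proof is correct and follows essentially the same route as the paper's: decompose the variance of the sum using independence, bound each term's variance by its second moment, swap the order of integration via Tonelli, and evaluate $\int K^2((z-x)/h)\,dx = h\,m_K$ by a change of variables. The paper phrases the first step via explicitly centered variables $\eta_i(x) = K((X_i-x)/h) - \Exp K((X_i-x)/h)$, but this is the same calculation.
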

\begin{proof}
For $i \in [n]$ and $x \in \mathcal{X}$, consider the random variables $\eta_i(x) = K((X_i - x)/h)-\Exp K((X_i - x)/h)$. The random variables $\eta_1(x), \ldots, \eta_n(x)$ are independent and have zero mean, and their variance can be upper bounded as
\begin{align*}
\eta^2_i(x)  \leq \Exp K^2\left(\frac{X_i - x}{h}\right)  \hspace{50pt} \forall i \in [n] \enspace .
\end{align*}
Therefore, we have that
\begin{align*}
\sigma^2(x) = \Exp \left( \frac{1}{rh}\sum_{i=n-r+1}^n  \eta_i(x) \right)^2 = \frac{1}{r^2 h^2} \sum_{i=n-r+1}^n \Exp \eta_i^2(x)  \leq \frac{1}{r^2 h^2} \sum_{i=n-r+1}^n \Exp K^2\left(\frac{X_i - x}{h}\right) \enspace . 
\end{align*}
By integrating the above expression, we obtain that
\begin{align*}
\int \sigma^2(x) dx  &\leq \frac{1}{r^2 h^2}\sum_{i =n-r+1}^n \left( \int \Exp  K^2\left(\frac{X_i - x}{h}\right)dx\right) \\
&= \frac{1}{r^2 h^2} \sum_{i=n-r+1}^n\int  \left[ \int K^2\left(\frac{z_i-x}{h}\right)P_i(z_i)dz_i \right] dx \\
&= \frac{1}{r^2 h^2} \sum_{i=n-r+1}^n\int  \left[ \int K^2\left(\frac{z_i-x}{h}\right)dx \right] P_i(z_i)dz_i \\
&=  \frac{1}{r^2 \cdot h} \sum_{i=n-r+1}^n\int K^2(u) du\\
&= \frac{m_K}{rh} \enspace .
\end{align*}
\end{proof}


We remind that the Taylor expansion of a function $f$ on $\mathbb{R}$  that is differentiable $\beta$ times in each point of its domain can be written as follows. Let $x \in \mathbb{R}$, $u \in \mathbb{R}$, and $h>0$, then
\begin{align*}
    f(x + uh) = f(x) + f'(x)uh + \ldots + \frac{(uh)^\beta}{(\beta-1)!} \int_{0}^{1}(1 - \tau)^{\beta -1}f^{(\beta)}(x + \tau u h)d\tau \\
\end{align*}
\begin{proposition}[Upper bound Bias Error]
\label{ubkde:prop4}
Let $K$ be a kerner of order $\beta$. We have that
\begin{align*}
    \int b^2(x)dx = O\left(h^{2\beta}\right)
\end{align*}
\end{proposition}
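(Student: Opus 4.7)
The plan is to adapt the standard kernel density estimator bias analysis (as in Tsybakov's book) to the averaged density $P_{[r]}$ rather than a single density $f$. The calculation naturally splits into three steps: rewriting the bias, applying Taylor's theorem, and bounding the $L_2$ norm.

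First, I would perform the change of variable $u = (z-x)/h$ inside the expectation defining $\hat{P}^r_{K,h}(x)$ to obtain $\mathbb{E}[\hat{P}^r_{K,h}(x)] = \int K(u)\, P_{[r]}(x+uh)\,du$. Combined with $\int K(u)\,du = 1$ (the first defining property of a kernel of order $\beta$), this yields
\[
b(x) = \int K(u)\bigl[P_{[r]}(x+uh) - P_{[r]}(x)\bigr]\,du.
\]

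Next, I would observe that since each $P_i$ is $\beta$-smooth and $P_{[r]}$ is their average, $P_{[r]}$ is itself $\beta$-smooth, with $P_{[r]}^{(\beta)} = \frac{1}{r}\sum_{i=n-r+1}^n P_i^{(\beta)}$. Applying Taylor's theorem with integral remainder of order $\beta$ to $P_{[r]}(x+uh)$ around $x$ gives a polynomial part of orders $1,\ldots,\beta-1$ plus the integral remainder displayed in the excerpt. The orthogonality conditions $\int u^j K(u)\,du = 0$ for $j=1,\ldots,\beta$ kill every polynomial term, leaving only
\[
b(x) = \frac{h^{\beta}}{(\beta-1)!}\int K(u)\,u^{\beta}\int_{0}^{1}(1-\tau)^{\beta-1} P_{[r]}^{(\beta)}(x+\tau u h)\,d\tau\,du.
\]

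Finally, to bound $\int b^2(x)\,dx = \|b\|_2^2$, I would apply the generalized Minkowski inequality to exchange the $L_2$ norm with the two integrals, and then exploit translation invariance of the $L_2$ norm to replace $\|P_{[r]}^{(\beta)}(\cdot+\tau u h)\|_2$ by $\|P_{[r]}^{(\beta)}\|_2$. The result is
\[
\|b\|_2 \leq \frac{h^{\beta}}{(\beta-1)!}\,\|P_{[r]}^{(\beta)}\|_2 \int |K(u)|\,|u|^{\beta}\,du \int_{0}^{1}(1-\tau)^{\beta-1}\,d\tau = \frac{h^{\beta}}{\beta!}\,\|P_{[r]}^{(\beta)}\|_2 \int |K(u)|\,|u|^{\beta}\,du.
\]
The kernel-of-order-$\beta$ assumption guarantees $\int |K(u)|\,|u|^{\beta}\,du < \infty$, and Minkowski's inequality together with the implicit uniform Sobolev bound on the class of $\beta$-smooth densities controls $\|P_{[r]}^{(\beta)}\|_2 \le \frac{1}{r}\sum_{i=n-r+1}^{n}\|P_i^{(\beta)}\|_2$ by a constant. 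Squaring yields $\int b^2(x)\,dx = O(h^{2\beta})$.

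The main technical subtlety is the correct use of generalized Minkowski combined with translation invariance: a pointwise Cauchy--Schwarz before integration would produce an unnecessary extra factor and break the sharp $h^{2\beta}$ rate. A minor housekeeping step is justifying the interchange of expectation and the outer integral (Tonelli) and checking that $P_{[r]}$ inherits $\beta$-smoothness from its summands, both of which are routine.
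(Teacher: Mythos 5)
Your proof is correct but takes a genuinely different route from the paper's. You work directly with the averaged density $P_{[r]}$, observing that it inherits $\beta$-smoothness (by linearity of derivatives and Minkowski on $\|P_{[r]}^{(\beta)}\|_2$), Taylor-expand once, and apply generalized Minkowski plus translation invariance to conclude. The paper instead keeps the sum over $i$ explicit: after writing $b(x)=\sum_i \tfrac{1}{r}\int K(u)[P_i(x+uh)-P_i(x)]\,du$ and Taylor-expanding each $P_i$, it uses the full order-$\beta$ orthogonality $\int u^{\beta}K(u)\,du=0$ to subtract $P_i^{(\beta)}(x)$ inside the remainder, applies Jensen/Cauchy--Schwarz in $i$ to push the square inside the average, and then invokes Tsybakov's Proposition 1.5 term by term. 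The two approaches are essentially equivalent for integer Sobolev smoothness: the subtraction the paper performs buys nothing for the $h^{2\beta}$ rate here (it matters only for non-integer H\"older/Nikol'ski smoothness, where $\|P^{(\beta)}(\cdot+t)-P^{(\beta)}\|_2$ decays as a positive power of $|t|$), and your version avoids the extra Cauchy--Schwarz step over $i$. One thing worth flagging for both arguments: Definition~\ref{def:smooth} only asserts $\int(P^{(\beta)})^2<\infty$, not a uniform Sobolev bound $\int(P^{(\beta)})^2\le L^2$ over the class, so the hidden constant in $O(h^{2\beta})$ is not actually uniform over $\mathcal{S}_n(\bm{\Delta}_n,\beta)$ as stated; you correctly identify this as an implicit assumption, and the paper inherits the same issue when it invokes Tsybakov's Proposition 1.5, which is stated for a Sobolev ball of radius $L$.
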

\begin{proof}
Consider the bias error term
\begin{align*}
    b(x) = \Exp\left[ \pe(x) \right] - \pt(x)
\end{align*}
The function $b(x)$ can be rewritten as
\begin{align*}
    b(x) &= \frac{1}{h}\sum_{i =n-r+1}^n\frac{1}{r} \left\{
    \left[\int K \left( \frac{z - x}{h}\right)P_i(z)dz \right] - h P_i(x) \right\}\\
    &= \sum_{i =n-r+1}^n\frac{1}{r}  \int K(u)\left[P_i(x + uh) - P_i(x) \right] du
\end{align*}
By using the Taylor expansion of $P_i$ and the fact that $K$ is a kernel of order $\beta$, we obtain that
\begin{align*}
    b(x) = \sum_{i =n-r+1}^n\frac{1}{r} \int K(u) \frac{(uh)^{\beta}}{(\beta-1)!} \left[\int_{0}^{1}(1-\tau)^{\beta -1} P_i^{(\beta)}(x + \tau u h) d\tau \right]du
\end{align*}
Since the Kernel is of order $\beta$, we have that $\int K(u) u^\beta P^{(\beta)}(x) du = 0$ for any density $P$, therefore we have that
\begin{align*}
     b(x) &=  \sum_{i =n-r+1}^n\frac{1}{r} \int K(u) \frac{(uh)^{\beta}}{(\beta-1)!} \left[\int_{0}^{1}(1-\tau)^{\beta -1} P_i^{(\beta)}(x + \tau u h) d\tau \right]du \\
     &=  \sum_{i =n-r+1}^n\frac{1}{r} \int K(u) \frac{(uh)^{\beta}}{(\beta-1)!} \left[\int_{0}^{1}(1-\tau)^{\beta -1} \left( P_i^{(\beta)}(x + \tau u h) - P_i^{(\beta)}(x) \right) d\tau \right]du
\end{align*}
We use Cauchy-Schwarz inequality to separate the contribution of each $P_i$, and obtain that
\begin{align*}
    b^2(x) &=  \left(\sum_{i =n-r+1}^n\frac{1}{r} \int K(u) \frac{(uh)^{\beta}}{(\beta-1)!} \left[\int_{0}^{1}(1-\tau)^{\beta -1} \left( P_i^{(\beta)}(x + \tau u h) - P_i^{(\beta)}(x) \right) d\tau \right]du\right)^2  \\
    &\leq \sum_{i =n-r+1}^n\frac{1}{r}  \left(\int K(u) \frac{(uh)^{\beta}}{(\beta-1)!} \left[\int_{0}^{1}(1-\tau)^{\beta -1} \left( P_i^{(\beta)}(x + \tau u h) - P_i^{(\beta)}(x) \right) d\tau \right]du\right)^2 
\end{align*}
We use the above inequality to upper bound $\int b^2(x)dx$ as follows
\begin{align*}
    \int b^2(x) dx &\leq  \sum_{i =n-r+1}^n\frac{1}{r} \int\left(\int K(u) \frac{(uh)^{\beta}}{(\beta-1)!} \left[\int_{0}^{1}(1-\tau)^{\beta -1} \left|P_i^{(\beta)}(x + \tau u h) - P_i^{(\beta)}(x) \right| d\tau \right]du\right)^2  dx
\end{align*}
Now, we can proceed as in \citet[Proposition 1.5]{tsybakov2009introduction} to show an upper bound of $O(h^{2\beta})$ for each integral within the sum over $i$.  By doing so, we can conclude that
\begin{align*}
    \int b^2(x) dx =O\left(\sum_{i =n-r+1}^n \frac{1}{r} h^{2\beta} \right)=O\left( h^{2\beta} \right)
\end{align*}
\end{proof}

\subsubsection{Proof of the upper bound of \Cref{thm:smooth}}
Let $S \in \mathcal{S}_n(\bm{\Delta}_n,\beta)$. Let $K$ be a Kernel of order $\beta$ as in \Cref{def:kernel-l}, and let $h>0$. We consider the estimator $\hat{P}^r_{K,h}$ defined as in \Cref{sec:smooth}. We use \Cref{ubkde:prop1} and have that
\begin{align*}
\Exp \lVert P_n - \pe \rVert^2 \leq 2\int \left( d^2(x) + b^2(x) + \sigma^2(x) \right)dx
\end{align*}
We upper bound each term of the right-hand side of the above inequality with Propositions~\ref{ubkde:prop2},~\ref{ubkde:prop3} and~\ref{ubkde:prop4}:
\begin{align*}
    \Exp \lVert P_n - \pe \rVert^2 = O\left( h^{2\beta} + \frac{1}{rh} + \Delta^2_{n-r+1} \right)
\end{align*}
We choose the bandwidth as $h =r^{-1/(2\beta+1)}$. In this case, the above upper bound becomes
\begin{align*}
\Exp \lVert P_n - \pe \rVert^2 = O\left( r^{-2\beta/(\beta+1)}+ \Delta^2_{n-r+1} \right)
\end{align*}
If we use as $r$ the value $r^*$ defined as in the theorem statement, it holds that $\Delta^2_{n-r^*+1} \leq (r^*)^{-2
\beta/(2\beta+1)}$. We can conclude that $\Exp \lVert P_n - \pe \rVert^2 = O\left( (1/r^*)^{2\beta/(2\beta+1)}\right)$

\qed

\subsection{Lower Bound of \Cref{thm:smooth}}
\label{sub:lb-smooth}
Let $K: \mathbb{R} \mapsto [0, \infty)$ be a function such that $K(x) > 0$ only if $x \in (-1/2,1/2)$, $\int K(x)dx = 0$, $K$ is infinitely differentiable,
 $K$ is $\beta$-smooth,
 $\lVert K \rVert_{\infty} = \sup_{x}|K(x)| \leq 1$, and
 $\lVert K \rVert_2 =  \sqrt{\int K^2(x) dx} \leq 1$. By following the example of \citet[p.93]{tsybakov2009introduction}, we adopt the function
 \begin{align}
 \label{def-K-lb}
K(x) = K_0(4x+1) - K_0(4x-1)
\end{align}
where
\begin{align*}
    K_0(x) = \exp\left( -\frac{1}{1-u^2}\right)\mathbf{1}_{\{ -1 < x < 1 \}}   \enspace .
\end{align*}
\begin{figure}[ht]
\begin{center}
\centerline{\includegraphics[width=\columnwidth]{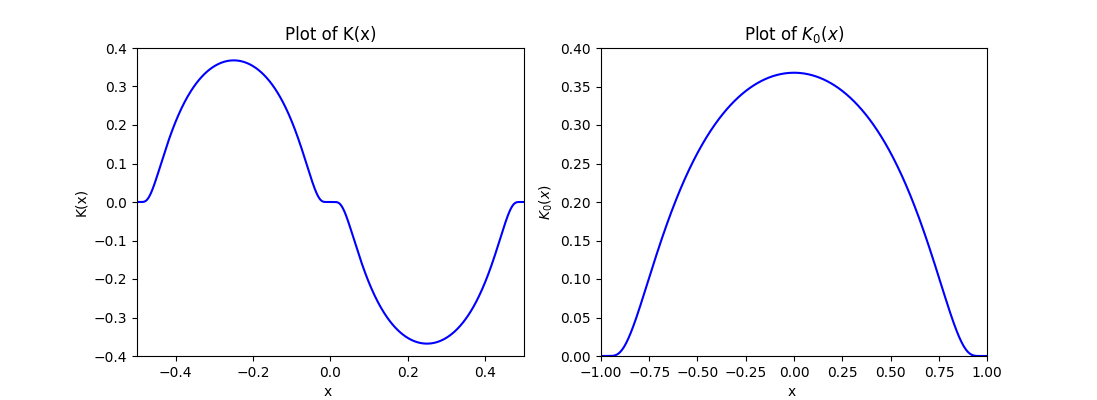}}
\caption{Plots of the functions $K(x)$ and $K_0(x)$}
\label{icml-historical}
\end{center}
\label{fig:plot-K}
\end{figure}

The proof of the lower bound follows the same skeleton of the method described in \Cref{sub:lb-discrete} for discrete distributions.

We consider a sample space $\mathcal{X} = [0,1]$. Let $m =(r^*)^{1/(2\beta+1)} \geq 1$. Let $x_j = (2j-1)/(2m)$ for $j \in [m]$. We construct a family of product distributions $\{S_w = P_{w,1} \ldots \times P_{w,n} : w \in \{0,1\}^m \}$  over $\mathcal{X}^n$ defined as follows
\begin{align*}
    &P_{w,i}(x) = \begin{cases}\mathbf{1}_{\{0 \leq x \leq 1\}}     \hspace{50pt} &  \mbox{ if } i < n-r^*+1 \\
    \mathbf{1}_{\{0 \leq x \leq 1\}} + \sum_{j=1}^m w_j \frac{(
    \Delta_{n-r^*+1} - \Delta_i)}{m^\beta} K(m(x - x_j)) & \mbox{ if } i 
 \geq n-r^*+1 \end{cases}
\end{align*}
This construction exhibits the following crucial property. For every $x \in [0,1]$, there exists at most one value $j \in [m]$ such that $K( m(x-x_j)) > 0$. This property will be used throughout this section.

In the next three propositions, we show that the family $\{ S_w : w \in \{0,1\}^m \}$ is well-defined, and we compute the quantities required to apply Assouad's Lemma.
\begin{proposition}
\label{prop:lb-smooth-belong}
We have that $\{ S_w : w \in \{0,1\}^m \} \subseteq \mathcal{S}_n( \bm{\Delta}_n, \beta)$.
\end{proposition}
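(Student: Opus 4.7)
The plan is to verify directly the three requirements of \Cref{def:family-smooth} for each $S_w$: every marginal $P_{w,i}$ must be a valid probability density, every marginal must be $\beta$-smooth, and $\bm{\Delta}_n$ must be a regular drift sequence for $S_w$ with respect to $L_2$. The structural property that drives every calculation is already flagged right after the construction: the $m$ bumps $K(m(x - x_j))$ have pairwise disjoint supports contained in $(0,1)$, since $K$ is supported in $(-1/2,1/2)$ and hence $K(m(x-x_j))$ is supported in $((j-1)/m,\, j/m)$.

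First I would show that each $P_{w,i}$ is a bona fide density. Integration to $1$ follows from $\int K(u)\,du = 0$, which kills every shifted bump and leaves only $\int \mathbf{1}_{\{0 \le x \le 1\}}\,dx = 1$. Non-negativity uses the disjoint-support property: at any $x \in [0,1]$ at most one bump contributes to the perturbation, so it is bounded pointwise by $(\Delta_{n-r^*+1}/m^{\beta}) \cdot \lVert K \rVert_\infty$, and plugging in $m = (r^*)^{1/(2\beta+1)}$, $\lVert K \rVert_\infty \leq 1$, and the defining inequality $\Delta_{n-r^*+1} \leq (1/r^*)^{\beta/(2\beta+1)}$ yields $\Delta_{n-r^*+1}/m^\beta \leq (1/r^*)^{2\beta/(2\beta+1)} \leq 1$; hence the perturbation lies in $[-1,1]$ on $[0,1]$ and vanishes outside, so $P_{w,i}\geq 0$ everywhere. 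For $\beta$-smoothness, $K$ is $C^\infty$ with compact support (all derivatives decaying smoothly to zero at the boundary of each bump), so the perturbation is $C^\infty$ on $\mathbb{R}$; on the interior of the sample space $P_{w,i}$ equals the uniform density plus a smooth function, and hence $P_{w,i}^{(\beta-1)}$ is absolutely continuous with $\int (P_{w,i}^{(\beta)})^2\,dx < \infty$.

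For the drift condition I would verify $L_2(P_{w,i}, P_{w,n}) \leq \Delta_i$ for every $i$. A direct computation gives $P_{w,n}(x) - P_{w,i}(x) = \sum_{j} w_j (\Delta_i/m^\beta) K(m(x - x_j))$ when $i \geq n-r^*+1$ and $\sum_j w_j (\Delta_{n-r^*+1}/m^\beta) K(m(x - x_j))$ when $i < n-r^*+1$. In either case the disjoint supports collapse every cross term, and after the change of variable $u = m(x - x_j)$ the squared $L_2$-norm reduces to $\lVert w \rVert_1 \cdot c^2/m^{2\beta+1} \cdot \lVert K \rVert_2^2$ with $c \in \{\Delta_i, \Delta_{n-r^*+1}\}$; using $\lVert w \rVert_1 \leq m$, $\lVert K \rVert_2 \leq 1$, and (for the second regime) the monotonicity $\Delta_{n-r^*+1} \leq \Delta_i$, this gives $\lVert P_{w,n} - P_{w,i} \rVert \leq \Delta_i/m^\beta \leq \Delta_i$. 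The only mildly delicate point is interpreting $\beta$-smoothness for a density written as uniform-plus-bump on the interval $\mathcal{X}$, which is clean here because the perturbation is supported strictly inside $(0,1)$ with all derivatives vanishing at its boundary; after that observation the argument is routine calculation driven by the disjoint supports and by the calibration $m^\beta \Delta_{n-r^*+1} \leq 1$ built into the definition of $r^*$.
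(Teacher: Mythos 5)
Your proof is correct and follows essentially the same strategy as the paper: verify each $P_{w,i}$ is a valid density (integration to $1$ via $\int K = 0$, non-negativity via disjoint bump supports and the calibration $\Delta_{n-r^*+1}/m^\beta \le 1$), verify the drift constraint, and verify $\beta$-smoothness, all driven by the disjoint-support observation. The one place you diverge is the drift check: the paper verifies the stepwise bound $\lVert P_{w,i} - P_{w,i+1}\rVert \le \Delta_i - \Delta_{i+1}$ (invoking the alternative, stronger formulation of regular drift stated in the paper's comment after \Cref{def:regular-sequence}), whereas you verify the direct condition $\lVert P_{w,i} - P_{w,n}\rVert \le \Delta_i$ from \Cref{def:regular-sequence} itself. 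Both computations are essentially the same disjoint-support change-of-variable argument giving $\lVert w\rVert_1\,c^2\lVert K\rVert_2^2/m^{2\beta+1}$, so this is a cosmetic difference; your version has the slight advantage of checking the definition directly, while the paper's version parallels its treatment of the discrete case. Your $\beta$-smoothness argument is more qualitative (smooth compactly supported perturbation vanishing at the boundary) than the paper's explicit evaluation of $\int\bigl(P_{w,i}^{(\beta)}\bigr)^2\,dx$, but both land on the same conclusion.
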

\begin{proof}
For any $w \in w \in \{0,1\}^m$ and $i < n-r^*+1$, we have that $P_{w,i}$ is trivially a well-defined density. If $i \geq n-r^*+1$, we have that
\begin{align*}
\int P_{w,i}(x)dx = \int \mathbf{1}_{\{0 \leq x \leq 1\}}dx +  \sum_{j=1}^m w_j \frac{(
    \Delta_{n-r^*+1} - \Delta_i)}{m^\beta} \int K(m(x - x_j)) dx  = 1
\end{align*}
where the last equality is due to the fact that $\int K(x) dx = 0$. We also want to prove that $P_{w,i}(x) \geq 0$ (the density is non-negative). We can show that for any $x \in [0,1]$, we have that
\begin{align*}
\left|\sum_{j=1}^m w_j \frac{(
    \Delta_{n-r^*+1} - \Delta_i)}{m^\beta} K(m(x - x_j))  \right| \leq \lVert K \rVert_{\infty} \frac{\Delta_{n-r^*+1}}{m^{\beta}} \leq \frac{\Delta_{n-r^*+1}}{m^{\beta}} 
\end{align*}
where in the first inequality, we used the fact that for any $x$, there exists at most one $j$ such that  $K(m(x - x_j)) >0$. By definition of $m$ and $r^*$, we have that $\Delta_{n-r^*+1}/m^{\beta} \leq (1/r^*)^{2\beta/(2\beta+1)} \leq 1$, and this is sufficient to prove that the distributions $P_{w,i}$ are non-negative.

We also need to show that we satisfy the condition on the regular drift $\bm{\Delta}_n$. For any $i < n-r^*+1$, we have that $\lVert P_{w,i} - P_{w,i+1}\rVert = 0 \leq \Delta_{i} - \Delta_{i+1}$. If $i \geq n-r^*+1$, we have that
\begin{align*}
\lVert P_{w,i} - P_{w,i+1}\rVert = \frac{\sqrt{\lVert w \rVert_1}}{m^{\beta}} |\Delta_{i} - \Delta_{i+1}| \sqrt{\int K^2(mx) dx} = \frac{\sqrt{\lVert w \rVert_1} \lVert K\rVert}{m^{\beta+{1/2}}}(\Delta_{i} - \Delta_{i+1}) \leq \Delta_{i}-\Delta_{i+1}
\end{align*}
where the last inequality is due to the fact that $\sqrt{\lVert w \rVert_1} \leq m^{1/2}$, $\lVert K \rVert \leq 1$, and $m^{\beta} \geq 1$ (as $m \geq 1$).

Finally, we show that the distributions $P_{w,i}$ are $\beta$-smooth over $[0,1]$. For $i \leq n-r^*+1$, this is trivially true. In order to prove this for $i > n-r^*+1$, we consider
\begin{align*}
P^{(\beta)}_{w,i}(x) = \sum_{j=1}^m w_j (\Delta_{n-r^*+1} - \Delta_i) K^{(\beta)}(m(x-x_j))
\end{align*}
Therefore, by integrating the square of the above expression, we obtain that
\begin{align*}
\int \left(P^{(\beta)}_{w,i}(x)\right)^2 dx &= \lVert w \rVert_1 (\Delta_{n-r^*+1}-\Delta_i)^2 \int K^{(\beta)}(mx)^2 dx \\
&= \lVert w \rVert_1 (\Delta_{n-r^*+1}-\Delta_i)^2  \frac{1}{m} \int K^{(\beta)}(x)^2 dx
\end{align*}
We have that $\lVert w \rVert_1/m \leq 1$, and $(\Delta_{n-r^*+1} - \Delta_i)^2 \leq \Delta_{n-r^*+1}^2 \leq 1$  by assumption of the theorem (as $r^* \geq 1$). Since $\int K^{(\beta)}(x)^2 dx < \infty$ as $K$ is $\beta$-smooth, we can conclude that  $\int \left(P^{(\beta)}_{w,i}(x)\right)^2 dx < \infty$, and $P_{w,i}$ is $\beta$-smooth.
\end{proof}

\begin{proposition}
\label{prop:lb-smooth-distance}
Let $w,w' \in \{0,1\}^m$. We have that
\begin{align*}
\lVert \theta_n(S_w) - \theta_n(S_{w'})\rVert = \sqrt{h(w,w')}\frac{\lVert K\rVert\Delta_{n-r^*+1}}{m^{\beta+1/2}}
\end{align*}
\end{proposition}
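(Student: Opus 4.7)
The plan is to carry out a direct computation of $\lVert P_{w,n} - P_{w',n}\rVert$ using the explicit construction of the family. Since $\Delta_n = 0$ by property (4) of a regular drift sequence (\Cref{def:regular-sequence}), the definition of $P_{w,i}$ evaluated at $i=n$ simplifies nicely, and the difference becomes
\begin{align*}
P_{w,n}(x) - P_{w',n}(x) = \frac{\Delta_{n-r^*+1}}{m^{\beta}} \sum_{j=1}^{m} (w_j - w'_j)\, K(m(x-x_j)).
\end{align*}
The indicator on $[0,1]$ cancels, and all prefactors outside the sum can be pulled out before squaring.

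The key observation that makes the computation clean is the \emph{disjoint-support} property of the bumps. Recall that $K$, as defined in \eqref{def-K-lb}, is supported on $(-1/2,1/2)$, and the centers $x_j = (2j-1)/(2m)$ are spaced by $1/m$. Therefore, for every $x \in [0,1]$ there is at most one index $j$ with $K(m(x-x_j)) \neq 0$, and all cross terms in the square of the sum vanish:
\begin{align*}
\bigl(P_{w,n}(x) - P_{w',n}(x)\bigr)^2 = \frac{\Delta_{n-r^*+1}^{2}}{m^{2\beta}} \sum_{j=1}^{m} (w_j - w'_j)^2\, K^2(m(x-x_j)).
\end{align*}

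Next, I would integrate term by term. The substitution $u = m(x-x_j)$ gives $\int K^2(m(x-x_j))\, dx = \lVert K\rVert^2/m$, independently of $j$. Since $w_j,w'_j \in \{0,1\}$ we have $(w_j-w'_j)^2 = \mathbf{1}_{\{w_j \neq w'_j\}}$, and summing over $j$ yields $h(w,w')$. Combining these factors:
\begin{align*}
\int \bigl(P_{w,n}(x) - P_{w',n}(x)\bigr)^2 dx = \frac{\Delta_{n-r^*+1}^{2} \lVert K\rVert^2}{m^{2\beta+1}}\, h(w,w').
\end{align*}
Taking the square root yields the stated identity. Essentially no obstacle arises here beyond invoking the disjoint-support property; this proposition is the straightforward distance computation that will later be fed into Assouad's Lemma, just as \Cref{prop:discrete-techn1} was in the discrete case.
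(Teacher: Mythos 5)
Your proof is correct and takes essentially the same approach as the paper: a direct computation of $\lVert P_{w,n} - P_{w',n}\rVert$ that uses $\Delta_n = 0$, the disjoint supports of the bumps $K(m(\cdot - x_j))$ to kill cross terms, and the substitution $u = m(x - x_j)$ to evaluate each term as $\lVert K\rVert^2/m$. You spell out the disjoint-support step more explicitly than the paper, but the computation is the same.
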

\begin{proof}
We have that
\begin{align*}
\lVert \theta_n(S_w) - \theta_n(S_{w'})\rVert = \lVert P_{n,w} - P_{n,w'}\rVert &= \frac{\Delta_{n-r^*+1}}{m^{\beta}}\sqrt{\sum_{j : w_j \neq w'_j} \int K^2(m(x-x_j))dx} \\
&= \sqrt{h(w,w')}\frac{\lVert K\rVert\Delta_{n-r^*+1}}{m^{\beta+1/2}}
\end{align*}
\end{proof}

\begin{proposition}
\label{prop:lb-kl-smooth}
Let $w,w' \in \{0,1\}^m$ such that $h(w,w') =1$. Let $q$ be the bit in which $w$ and $w'$ differ, and assume that $w_q =1$. We have that
\begin{align*}
\KL( S_w \| S_{w'}) = O(1)
\end{align*}
\end{proposition}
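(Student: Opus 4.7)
The plan is to reduce $\KL(S_w \| S_{w'})$ to a one-dimensional integral by exploiting the factorization property together with the crucial disjoint-support structure of the bumps $K(m(\cdot - x_j))$, and then control that integral by an elementary Taylor-type estimate. First I would apply \Cref{prop:factorizationKL}: since $P_{w,i} = P_{w',i}$ for every $i < n - r^* + 1$ by construction, we get
\begin{align*}
\KL(S_w \| S_{w'}) = \sum_{i = n-r^*+1}^n \KL(P_{w,i} \| P_{w',i}).
\end{align*}

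Next I would exploit the fact that $w$ and $w'$ differ only in the coordinate $q$, and that each bump $K(m(\cdot - x_j))$ is supported in an interval of length $1/m$ around $x_j$, so the bumps have pairwise disjoint supports. Consequently $P_{w,i}$ and $P_{w',i}$ agree outside the support of $K(m(\cdot - x_q))$; on that support, writing $a_i = (\Delta_{n-r^*+1} - \Delta_i)/m^{\beta}$, they equal $1 + a_i K(m(x-x_q))$ and $1$ respectively. The substitution $u = m(x - x_q)$ then gives
\begin{align*}
\KL(P_{w,i} \| P_{w',i}) = \frac{1}{m} \int \bigl(1 + a_i K(u)\bigr) \log\bigl(1 + a_i K(u)\bigr)\, du.
\end{align*}

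The main analytic step is bounding this integral. Using the definition of $r^*$ and $m = (r^*)^{1/(2\beta+1)}$, we have $|a_i K(u)| \leq \Delta_{n-r^*+1}\|K\|_\infty/m^{\beta} \leq (1/r^*)^{\beta/(2\beta+1)} \leq 1$, so one can apply the elementary inequality $(1+y)\log(1+y) \leq y + y^2$, which is valid for $y \geq -1/2$ (if necessary, the kernel $K$ from \eqref{def-K-lb} is rescaled by a harmless constant so that $\|K\|_\infty \leq 1/2$). The linear term contributes $a_i \int K(u)\, du = 0$ thanks to the zero-mean property of $K$, leaving $\int (1+a_i K)\log(1+a_i K)\, du \leq a_i^2 \int K^2(u)\, du \leq a_i^2$, since $\|K\|_2 \leq 1$.

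Summing, this yields
\begin{align*}
\KL(S_w \| S_{w'}) \leq \sum_{i=n-r^*+1}^n \frac{a_i^2}{m} \leq \frac{r^* \, \Delta_{n-r^*+1}^2}{m^{2\beta+1}},
\end{align*}
and substituting $m^{2\beta+1} = r^*$ together with the defining inequality $\Delta_{n-r^*+1}^2 \leq (1/r^*)^{2\beta/(2\beta+1)}$ collapses the bound to $(1/r^*)^{2\beta/(2\beta+1)} \leq 1$, giving $O(1)$ as claimed. The only delicate point is ensuring the scalar inequality is applied inside its range of validity; once the disjoint-support observation removes all cross-terms, the computation is a direct continuous analogue of \Cref{prop:discrete-techn2}.
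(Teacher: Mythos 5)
Your proof is correct, and it takes a mildly but genuinely different route from the paper at the key analytic step. Both proofs start identically: factorize the KL via \Cref{prop:factorizationKL}, use the disjoint supports of the bumps to reduce to the single bump at $x_q$ (where $P_{w',i}\equiv 1$), and substitute $u = m(x-x_q)$ to pick up the $1/m$ factor. Where you diverge is in bounding $\int (1+a_iK(u))\log(1+a_iK(u))\,du$. The paper exploits the explicit bimodal form $K(x) = K_0(4x+1)-K_0(4x-1)$, pairing the positive lobe $(1+y)\log(1+y)$ against the mirror-image negative lobe $(1-y)\log(1-y)$ (with $y$ proportional to $K_0$) and invoking \Cref{prop:auxiliary}; you instead apply the one-sided estimate $(1+y)\log(1+y)\le y+y^2$ pointwise and let the zero-mean condition $\int K = 0$ annihilate the linear term. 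Your route is arguably cleaner and more robust: it works for any zero-mean, bounded, square-integrable kernel and does not rely on the particular two-lobe structure. One small remark: your rescaling hedge is unnecessary, since the inequality $(1+y)\log(1+y)\le y+y^2$ actually holds on all of $[-1,\infty)$ (the function $g(y)=y+y^2-(1+y)\log(1+y)$ satisfies $g(-1)=g(0)=0$ and $g\ge 0$ on $[-1,0]$ as well as on $[0,\infty)$), so the bound $|a_iK(u)|\le 1$ already established from $\Delta_{n-r^*+1}/m^\beta \le 1$ and $\lVert K\rVert_\infty\le 1$ is enough; no modification of $K$ is needed. The final summation and the substitution $m^{2\beta+1}=r^*$ match the paper's computation exactly.
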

\begin{proof}

By using the factorization property of the KL-divergence (\Cref{prop:factorizationKL}), we have that
\begin{align}
\label{eq:tmp-1-smooth}
    \KL(S_{w} \| S_{w'}) &= \sum_{i=1}^n \KL(P_{w,i} \| P_{w',i}) 
 \nonumber \\ &= \sum_{i=n-r^*+1}^n \left\{ \int \left(1+ \frac{(\Delta_{n-r^*+1}-\Delta_i)K(m(x-x_q))}{m^{\beta}}\right)\log\left(1+\frac{(\Delta_{n-r^*+1}-\Delta_i)K(m(x-x_q))}{m^{\beta}}\right) dx \right\}  \enspace ,
\end{align}
where in the last equality we used the definition of KL-divergence and the observation that $P_{w,i}$ and $P_{w',i}$ can only differ on the interval $[(q-1)/m, q/m]$. 
By using the definition of $K(\cdot)$ of \eqref{def-K-lb}, for any $i \geq n-r^*+1$ we can rewrite the above integral as
\begin{align*}
    &\int \left(1+ \frac{(\Delta_{n-r^*+1}-\Delta_i)K(m(x-x_q))}{m^{\beta}}\right)\log\left(1+\frac{(\Delta_{n-r^*+1}-\Delta_i)K(m(x-x_q))}{m^{\beta}}\right) dx     \\
    = &\int \Big[ \left(1+ \frac{(\Delta_{n-r^*+1}-\Delta_i)K_0(4m x)}{m^{\beta}}\right)\log\left(1+\frac{(\Delta_{n-r^*+1}-\Delta_i)K_0(4mx)}{m^{\beta}}\right) \\
    + &\left(1-  \frac{(\Delta_{n-r^*+1}-\Delta_i)K_0(4mx)}{m^{\beta}}\right)\log\left(1-\frac{(\Delta_{n-r^*+1}-\Delta_i)K_0(4mx)}{m^{\beta}}\right)\Big] dx \\
\end{align*}
We can observe that for any $x \in \mathbb{R}$, we have that
\begin{align*}
\left| \frac{\Delta_{n-r^*+1} - \Delta_i}{m^{\beta}} K_0(4mx)\right| \leq \left| \frac{\Delta_{n-r^*+1}}{m^{\beta}}\right|\lVert K \rVert_{\infty} \leq 1
\end{align*}
where in the last inequality, we used $\Delta_{n-r^*+1}/m^{\beta} \leq (1/r^*)^{2\beta/(2\beta+1)} \leq 1$, and the fact that $\lVert K \rVert_{\infty} \leq 1$.
Therefore, we can use \Cref{prop:auxiliary} and show that
\begin{align*}
    \KL(S_{w'} \| S_w) &\leq \frac{2}{m^{2\beta}} \sum_{i=n-r^*+1}^n(\Delta_{n-r^*+1}-\Delta_i)^2 \int K_0(4mx)^2dx \\
    &= O\left( \frac{r^*}{m^{2\beta+1}} \Delta_{n-r^*+1}^2 \lVert K_0 \rVert^2 \right) 
\end{align*}
We have that $\lVert K_0 \rVert^2  \leq \lVert K \rVert^2 \leq 1$. Also, ${m^{2\beta+1}} = r^*$ by definition of $m$, and $\Delta^2_{n-r^*+1} \leq 1$. We can conclude that $\KL(S_{w'} \| S_w) = O(1)$.
\end{proof}
\subsubsection{Proof of the lower bound of \Cref{thm:smooth}}
We can assume that $r^* < n$. In fact, if $r^* = n$, the lower bound immediately follows from the lower bound $\Omega\left(  n^{-2\beta/(2\beta+1)}  \right)$ for learning a $\beta$-smooth density with $n$ independent and identically distributed samples.

Let $\mathcal{S}^* =\{S_{w} : w \in \{0,1\}^m\}$ be defined as at the beginning of  \Cref{sub:lb-smooth}. Due to \Cref{prop:lb-smooth-belong}, we have that
\begin{align*}
\inf_{\hat{\theta}_n} \sup_{S \in \mathcal{S}_n(\bm{\Delta}_n, \beta)}  \Exp_{\bm{X}_n \sim S} \lVert \theta_n(S) - \hat{\theta}_n(
\bm{X}_n)\rVert^2  
\geq \inf_{\hat{\theta}_n} \sup_{S \in \mathcal{S}^*}  \Exp_{\bm{X}_n \sim S} \lVert \theta_n(S) - \hat{\theta}_n(
\bm{X}_n)\rVert^2  
\end{align*}
We can lower bound the right-hand side of the above inequality by using Assouad's Lemma (\Cref{assouad}) with metric $d = L_2$ and $p=2$. By using \Cref{prop:lb-smooth-distance} and~\ref{prop:lb-kl-smooth}, we obtain that 
\begin{align*}
\inf_{\hat{\theta}_n} \sup_{S \in \mathcal{S}^*}  \Exp_{\bm{X}_n \sim S} \lVert \theta_n(S) - \hat{\theta}_n(
\bm{X}_n)\rVert^2 = \Omega\left(  \min_{w \neq w'}\left( 1/\sqrt{h(w,w')} \right) \frac{\Delta_{n-r^*+1}}{m^{\beta-1/2}} \frac{1}{e^{O(1)}} \right)
\end{align*}
We have that $\min_{w \neq w'}\left( 1/\sqrt{h(w,w')} \right) = 1/\sqrt{m}$. By substituting the definition of $m$, we obtain that 
\begin{align*}
\inf_{\hat{\theta}_n} \sup_{S \in \mathcal{S}^*}  \Exp_{\bm{X}_n \sim S} \lVert \theta_n(S) - \hat{\theta}_n(
\bm{X}_n)\rVert^2 = \Omega\left( \Delta_{n-r^*+1} r^{-\beta/(2\beta+1)}\right)
\end{align*}
By assumption, we have that $r^* < n$, hence $\Delta_{n-r^*} = \Omega\left((r^*)^{-\beta/(2\beta+1)}\right)$. Using the definition of $\bm{\Delta}_n$, we have that $\Delta_{n-r^*+1} \geq 
\frac{1}{c}\Delta_{n-r^*} = \Omega\left((r^*)^{-\beta/(2\beta+1)}\right)$. We can conclude that 
\begin{align*}
\inf_{\hat{\theta}_n} \sup_{S \in \mathcal{S}^*}  \Exp_{\bm{X}_n \sim S} \lVert \theta_n(S) - \hat{\theta}_n(
\bm{X}_n)\rVert^2 = \Omega\left(  (r^*)^{-\beta/(2\beta+1)}  (r^*)^{-\beta/(2\beta+1)}\right) = \Omega\left(  (r^*)^{-2\beta/(2\beta+1)}  \right)
\end{align*}
\qed

\end{document}